\documentclass[11pt]{article}
\usepackage[margin=0.9in]{geometry}

\usepackage[round]{natbib}
\usepackage{amsmath,amssymb,amsthm}
\usepackage{bm}
\usepackage{xcolor}
\usepackage{caption}

\usepackage{shortcuts} 

\usepackage[titlenumbered,linesnumbered,ruled,noend]{algorithm2e}

\SetCommentSty{mycommfont}
\SetEndCharOfAlgoLine{}

\usepackage[textsize=tiny]{todonotes}
\title{On sparsity, extremal structure, and monotonicity properties of Wasserstein and Gromov-Wasserstein optimal transport plans}
\author{
Titouan Vayer\\
\small Inria, Rennes, France.
}

\date{}

\begin{document}

\maketitle

\begin{abstract}
This note gives a self-contained overview of some important properties of the Gromov--Wasserstein (GW) distance, 
compared with the standard linear optimal transport (OT) framework.
More specifically, I explore the following questions: 
are GW optimal transport plans sparse? 
Under what conditions are they supported on a permutation? 
Do they satisfy a form of cyclical monotonicity? 
In particular, I present the conditionally negative semi-definite property and show that, when it holds, 
there are GW optimal plans that are sparse and supported on a permutation.
\end{abstract}

\section{Introduction}

This note originated from the discussions with colleagues: I find out that a simple and pedagogical 
exposition of the fundamentals properties of the Gromov-Wasserstein (GW) optimal plans was maybe a bit missing.
The aim here is not to present new results, but to highlight a few properties of GW 
that I find particularly interesting. While these results exist in the literature, 
they are rarely gathered in a single place; my goal is to offer the most self-contained exposition possible.
I rely on only a few external theorems and instead prove most statements directly.

To me, GW is a particularly fascinating object in optimal transport (OT), 
and many of its properties are still not fully understood. 
I hope this note provides an instructive perspective that helps the reader develop a 
clearer intuition for GW, and possibly contributes, even if modestly, to a deeper overall understanding of its structure.

\subsection{Linear and quadratic OT}

I begin this note by fixing the notations and recalling the fundamentals of discrete OT. 
The goal is to be concise, so readers seeking more details can refer to \citet{peyre2019computational}.

Standard linear OT aims to align two distributions according to a least-effort principle. 
We denote by $\simplex_n \triangleq \{\abf \in \R^n_+:\sum_{i=1}^n a_i = 1\}$.
Let $\Cbf \in \R^{n \times m}$ be a cost matrix, for instance encoding the pairwise distances between points from the two distributions, 
and let $\abf \in \simplex_n$ and $\bbf \in \simplex_m$ be probability vectors representing 
the available mass and the demand, respectively. 
The set of couplings, or transport plans, with prescribed marginals $\abf$ and $\bbf$, is defined by
\begin{equation}
\Pi(\abf, \bbf) \triangleq \{\Pbf \in \R_{+}^{n \times m}:  \Pbf \one_m = \abf, \Pbf^\top \one_n = \bbf\}\,,
\end{equation}
where $\one_n$ is the vector of ones. 

A special case of a coupling is when $n =m$ and when the mass is uniform $\abf = \bbf = \frac{1}{n} \one_n$: 
in this case a coupling $\Pbf$ can be supported by a permutation, that is $\Pbf \in \Perm(n)$ where
\begin{equation}
    \Perm(n) \triangleq \left\{\Pbf \in \R^{n \times n}: \exists \sigma \in \mathfrak{S}_n, 
    P_{ij} = \begin{cases} \frac{1}{n} \text{ if } j = \sigma(i) \\ 0 \text{ otherwise} \end{cases}\right\}\,,
\end{equation}
where $\mathfrak{S}_n$ is the set of all permutations of $\integ{n}$.

Linear OT searches for the transport plan $\Pbf \in \Pi(\abf,\bbf)$ that minimizes the shifting cost $\langle \Cbf, \Pbf \rangle \triangleq \sum_{ij} C_{ij} P_{ij}$. 
In the following, we note
\begin{equation}
    \label{eq:linear_ot}
    \tag{LinOT}
    \OT(\Cbf,\abf, \bbf)\triangleq \min_{\Pbf \in \Pi(\abf,\bbf)} \ \langle \Cbf, \Pbf \rangle\,.
\end{equation}
The quantity defined in problem \eqref{eq:linear_ot} is commonly referred to as the Wasserstein 
distance when $\Cbf$ represents a pairwise distance matrix. A key feature of this formulation is 
that the objective is \emph{linear} in $\Pbf$, in contrast with the ``quadratic’’ nature of the Gromov-Wasserstein 
problem. We introduce below a deliberately general version of this quadratic formulation, which will be specified in more detail later.

Let $\Lbf = (L_{ijkl})$ be a 4D tensor with $(i, j) \in \integ{n} \times \integ{m}, (k,l) \in \integ{n} \times \integ{m}$. 
The GW problem also aims to align the two distributions, but it does so by minimizing 
the quadratic cost $\sum_{ijkl} L_{ijkl} P_{ij} P_{kl}$. By introducing the tensor–matrix product $\Lbf \otimes \Pbf$, defined as the matrix
\[
\Lbf \otimes \Pbf \triangleq \big(\sum_{ij} L_{ijkl} P_{ij}\big)_{(k,l) \in \integ{n} \times \integ{m}},
\]
the objective minimized by GW can be written compactly as $\langle \Lbf \otimes \Pbf, \Pbf \rangle$.
We note 
\begin{equation}
    \label{eq:quad_ot}
    \tag{QuadOT}
    \GW(\Lbf,\abf, \bbf)\triangleq \min_{\Pbf \in \Pi(\abf,\bbf)} \ \langle \Lbf \otimes \Pbf, \Pbf \rangle\,.
\end{equation}
As announced, problem \eqref{eq:quad_ot} is \emph{quadratic} in $\Pbf$, which makes both 
the optimization and the theoretical analysis significantly more involved. 
In practice, the tensor $\Lbf$ is typically constructed as follows: given two ``intra’’ 
cost matrices $\Cbf \in \R^{n \times n}$ and $\overline{\Cbf} \in \R^{m \times m}$, 
which encode pairwise similarities within each space, 
together with a function $\Lcal: \R \times \R \to \R$ designed to measure how comparable two similarities are, 
one defines $\Lbf$ as
\begin{equation}
    \label{eq:Lbf}
    \Lbf = (L_{ijkl}) \text{ where } L_{ijkl}= \Lcal\left(C_{ik}, \overline{C}_{jl}\right)\,.
\end{equation}
A standard example is the squared-loss setting, 
where $\Lcal(a,b) = (a-b)^2$ and $\Cbf$ and $\overline{\Cbf}$ 
are the matrices of squared pairwise distances within each distribution. 
In what follows, we say that $\Lbf$ is symmetric if, for all $(i,j,k,l)$, one has $L_{ijkl} = L_{klij}$, 
meaning that swapping $i$ with $k$ and $j$ with $l$ leaves the tensor unchanged. 

We will also need the notion of the support of $\Pbf$, defined as the set of indices corresponding to the nonzero entries of the coupling:
\begin{equation}
    \label{eq:support}
    \supp(\Pbf)\triangleq \{(i,j) \in \integ{n} \times \integ{m}: P_{ij} > 0\}\,.
\end{equation}
Finally, two general definitions. For a convex set $\Ccal$, an \emph{extreme point} of $\Ccal$ 
is a point that cannot be written as a nontrivial convex 
combination\footnote{If $x$ is such point and $x = (1-t)y + tz$ with $0<t<1$ then $x=y=z$.} of other points in $\Ccal$.
In a graph $G = (V, E)$, a \emph{cycle} is a sequence of nodes $u_1, u_2, \cdots, u_k$ in $V$, such that each consecutive pair $(u_i, u_{i+1})$ 
is connected by an edge in $E$, it starts and ends at the same vertex ($u_k =u_1$), and all other vertices are distinct.

\section{Some important properties of linear OT}

The fundamental properties of linear OT that we aim to investigate for GW 
in this note are the sparsity and monotonicity of optimal transport plans, as well as the ``tightness'' of the coupling relaxation. 
We detail these three properties below and provide proofs for each.

\subsection{Cyclical monotonicity} 

This is one of the most fundamental properties of linear OT, sometimes referred to as the \emph{shortening principle}. 
To illustrate, consider the following simple example: suppose that $(i,j)$ and $(i',j')$ are matched by $\Pbf$ that is optimal, they belong to $\supp(\Pbf)$. 
This means that the pairs $(i,j)$ and $(i',j')$ are matched because doing so incurs minimal cost. 
Intuitively, switching the matches to $(i,j')$ and $(i',j)$ should result in a higher cost; otherwise, $\Pbf$ would not be optimal. 

Formally, this can be seen by considering a matrix $\Qbf \in \R^{n \times m}$ that is identical to $\Pbf$ except at these four indices:
\begin{equation}
\begin{split}
Q_{ij} &= P_{ij} - \varepsilon,
\quad Q_{i'j'} = P_{i'j'} - \varepsilon, \\
Q_{ij'} &= P_{ij'} + \varepsilon,
\quad Q_{i'j} = P_{i' j} + \varepsilon, \\
\end{split}
\end{equation}
where $\varepsilon = \min\{P_{ij}, P_{i'j'}\} > 0$. It is then straightforward to verify that $\Qbf \in \Pi(\abf, \bbf)$, 
since the marginals remain unchanged and all entries are nonnegative by the choice of $\varepsilon$. Additionally,
\begin{equation}
    \langle \Cbf, \Qbf \rangle - \langle \Cbf, \Pbf \rangle = \varepsilon(-C_{ij} - C_{i'j'} + C_{ij'} + C_{i'j})\,.
\end{equation}
Using that $\Pbf$ is optimal implies $\langle \Cbf, \Qbf \rangle - \langle \Cbf, \Pbf \rangle \leq 0$ thus $C_{ij} + C_{i'j'} \leq  C_{ij'} + C_{i'j}$ 
which can be rephrased as\footnote{‘‘Lorsque le transport du deblai se fait de manière que la somme des produits des molécules par l’espace parcouru est un minimum, les routes de deux points quelconques A \& B, ne doivent plus se couper entre leurs extrémités, car la somme Ab + Ba des routes 
qui se coupent est toujours plus grande que la somme Aa + Bb de celles qui ne se coupent pas'' \citep{monge1781memoire}.} ``better not to cross the path'' !
This argument applies to just two pairs of points in the support, but the remarkable fact is that extending this property 
to all pairs leads to a full characterization: a transport plan is optimal if and only if, for every pair of points in its support, 
the total cost of the matched points is less than or equal to the total cost obtained by swapping them.
\begin{theorem}
For any costs $\Cbf$, a coupling $\Pbf \in \Pi(\abf, \bbf)$ is optimal for \eqref{eq:linear_ot} if and only if 
for any $N \in \mathbb{N}^{*}, (i_1, j_1),\cdots, (i_N, j_N) \in \supp(\Pbf)^N$ and permutation $\sigma \in \mathfrak{S}_N$,
\begin{equation}
\sum_{k=1}^{N} C_{i_k j_k} \leq \sum_{k=1}^{N} C_{i_k j_{\sigma(k)}}\,. 
\end{equation}
\end{theorem}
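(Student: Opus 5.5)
The plan is to prove the two directions separately. Necessity generalizes the two-point swap argument above. Sufficiency goes through LP duality, with the dual potentials built explicitly from the cyclical monotonicity condition (Rockafellar's construction).

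\textbf{Necessity.} Suppose $\Pbf$ is optimal, and fix $(i_1,j_1),\dots,(i_N,j_N)\in\supp(\Pbf)$ and $\sigma\in\mathfrak{S}_N$. Let $\varepsilon = \min_k P_{i_kj_k}/N > 0$. The division by $N$ guards against repeated pairs. Define
\[
\Qbf = \Pbf + \varepsilon\sum_{k=1}^N\big(\mathbf{e}_{i_k}\mathbf{e}_{j_{\sigma(k)}}^\top - \mathbf{e}_{i_k}\mathbf{e}_{j_k}^\top\big).
\]
Row $i_k$ gains and loses $\varepsilon$, and since $\sigma$ is a bijection each column $j_k$ also gains and loses $\varepsilon$ once per occurrence. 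So the marginals are preserved. Nonnegativity holds by the choice of $\varepsilon$, so $\Qbf\in\Pi(\abf,\bbf)$. Then
\[
0\le\langle\Cbf,\Qbf-\Pbf\rangle=\varepsilon\Big(\sum_k C_{i_kj_{\sigma(k)}}-\sum_k C_{i_kj_k}\Big),
\]
which is the claimed inequality.

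\textbf{Sufficiency.} Assume the condition holds. I construct $\mathbf{f}\in\R^n$ and $\mathbf{g}\in\R^m$ satisfying:
\begin{itemize}
\item $f_i+g_j\le C_{ij}$ for all $(i,j)$;
\item equality on $\supp(\Pbf)$.
\end{itemize}
Given these, for any $\Qbf\in\Pi(\abf,\bbf)$,
\[
\langle\Cbf,\Qbf\rangle\ge\sum_i f_ia_i+\sum_j g_jb_j=\langle\Cbf,\Pbf\rangle,
\]
because the inequality is tight on the support of $\Pbf$. This gives optimality directly by weak duality, with no need for an external duality theorem.

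To build the potentials, fix a base pair $(i_0,j_0)\in\supp(\Pbf)$ and define, for each row index $i$,
\[
f_i=\inf\Big\{C_{ij_N}-C_{i_Nj_N}+\sum_{k=0}^{N-1}\big(C_{i_{k+1}j_k}-C_{i_kj_k}\big)\Big\},
\]
where the infimum is over all finite chains $(i_1,j_1),\dots,(i_N,j_N)\in\supp(\Pbf)$, $N\ge0$. Then set
\[
g_j=\min_i\,(C_{ij}-f_i),
\]
which automatically gives $f_i+g_j\le C_{ij}$.

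\textbf{Key steps.} Three facts must be checked.
\begin{enumerate}
\item \emph{Finiteness.} Cyclical monotonicity applied to the cyclic shift that closes the chain back to $(i_0,j_0)$ shows that each chain sum is bounded below by $C_{i_0 j_0}-C_{ij_0}$. Hence $f_i>-\infty$, and $f_i<\infty$ trivially. This step uses only cyclic permutations, which is enough.
\item \emph{Equality on the support.} For $(i,j)\in\supp(\Pbf)$, extending any chain by the link $(i,j)$ shows $f_{i'}\le C_{i'j}-C_{ij}+f_i$ for every $i'$. Equivalently, $C_{ij}-f_i\le C_{i'j}-f_{i'}$, so the minimum defining $g_j$ is attained at $i$ and $f_i+g_j=C_{ij}$.
\item \emph{Conclusion.} The weak-duality computation above then finishes the proof.
\end{enumerate}

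The main obstacle is getting the chain bookkeeping right in the first two steps: making the sign conventions consistent, and checking that closing a chain really produces a cyclic permutation covered by the hypothesis. Repeated indices are harmless because the hypothesis allows arbitrary tuples from $\supp(\Pbf)^N$. Since everything is finite, all infima are over finitely many values once chains are reduced to simple ones, but finiteness via the lower bound avoids needing that reduction.
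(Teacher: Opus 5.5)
Your proof is correct in structure and takes a different, more complete route than the paper, but the explicit lower bound in your Step 1 is false as written. It needs a one-line repair.

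\textbf{Comparison with the paper.} The paper only writes out the two-point swap, says that necessity for general $N$ goes ``the exact same way'', and defers sufficiency to LP duality without giving a proof. Your necessity argument makes that precise. The choice $\varepsilon=\min_k P_{i_kj_k}/N$ correctly handles repeated pairs in the tuple, which the paper's recipe $\varepsilon=\min\{P_{ij},P_{i'j'}\}$ does not. You also get the sign right: optimality gives $\langle\Cbf,\Qbf\rangle-\langle\Cbf,\Pbf\rangle\ge 0$, whereas the paper's text writes $\le 0$. Your sufficiency argument (Rockafellar potentials plus weak duality) is self-contained and avoids strong duality altogether, which is more than the paper provides. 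Steps 2 and 3 are correct as written.

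\textbf{The error in Step 1.} You claim every chain sum at $i$ is at least $C_{i_0j_0}-C_{ij_0}$. This fails on a simple example:
take $n=2$, $m=1$, $\abf=(\tfrac12,\tfrac12)$, $\bbf=(1)$, $C_{11}=5$, $C_{21}=0$, and base pair $(i_0,j_0)=(1,1)$.
Here $\Pi(\abf,\bbf)$ is a single point, so the hypothesis holds. Yet the $N=0$ chain already gives
\[
f_2\le C_{21}-C_{11}=-5<5=C_{i_0j_0}-C_{2j_0}.
\]

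\textbf{The repair.} The closing argument you describe actually yields a different bound. Apply the hypothesis to the $N+1$ support points $(i_0,j_0),\dots,(i_N,j_N)$ with the cyclic shift that gives column $j_k$ to row $i_{k+1}$ and column $j_N$ back to row $i_0$:
\[
\sum_{k=0}^{N-1}C_{i_{k+1}j_k}+C_{i_0j_N}-\sum_{k=0}^{N}C_{i_kj_k}\ \ge\ 0 .
\]
Your chain sum at $i$ is exactly this quantity plus $C_{ij_N}-C_{i_0j_N}$. Hence every chain sum is at least $\min_{j\in\integ{m}}(C_{ij}-C_{i_0j})$. The correction term involves the last column $j_N$, not $j_0$, and has the opposite orientation to what you wrote. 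This bound is finite because there are finitely many columns, so $f_i>-\infty$. Finiteness is genuinely needed in Step 2, where you subtract $f_i$ to get $C_{ij}-f_i\le C_{i'j}-f_{i'}$. With this correction the proof is complete.
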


\noindent\begin{minipage}{0.55\linewidth}
The direction ``$\Pbf$ optimal $\implies$ monotonicity'' can be proved in the exact same way as the case $N = 2$ above. 
The other direction is a little bit more involved and I will not write the proof here (e.g., it can be proved using duality of linear OT).

\subsection{Sparsity of some optimal plans} 

Another key property is that, 
among all optimal transport plans, there exist \emph{sparse plans} with relatively 
few nonzero entries—specifically, no more than $n + m - 1$. 
To establish this, we first need a small result regarding the structure of coupling matrices.
Any coupling $\Pbf$ defines a bipartite graph $G(\Pbf) = (S \cup T, E)$ 
where $S = \integ{n}, T = \integ{m}$ are the source and target nodes that corresponds 
to the two distributions and $E = \supp(\Pbf)$ (see Figure \ref{fig:graphfig}).
\end{minipage}
\hfill
\begin{minipage}{0.4\linewidth}
    \centering
    \includegraphics[width=\linewidth]{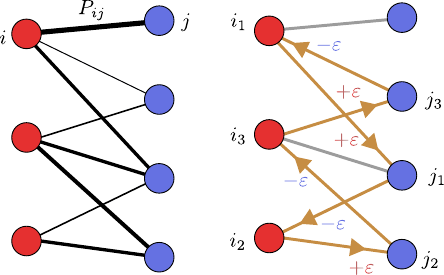}
    \captionof{figure}{\label{fig:graphfig} (Left) Bipartite graph $G(\Pbf)$ induced by $\Pbf$. 
    Weights on the edges are the values $P_{ij}$.
    (Right) It contains a 3-cycle $i_1, j_1, i_2, j_2, i_3, j_3, i_1$. 
    The forward edges $i \to j$ are marked with a $+\varepsilon$ perturbation, the backward with a $-\varepsilon$.}
\end{minipage}
\begin{proposition}
    \label{prop:nocycleareextreme}
 $\Pbf$ is an extreme point of $\Pi(\abf,\bbf)$ if and only if the graph $G(\Pbf)$ has no cycle.
\end{proposition}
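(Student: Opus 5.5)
I would prove both directions by contraposition, using the alternating $\pm\varepsilon$ perturbation along a cycle already suggested by Figure~\ref{fig:graphfig}.

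\emph{Cycle $\Rightarrow$ not extreme.} Suppose $G(\Pbf)$ contains a cycle. Since the graph is bipartite, the cycle alternates between source and target nodes, so it reads $i_1, j_1, i_2, j_2, \dots, i_K, j_K, i_1$ with $K \geq 2$ and all edges in $\supp(\Pbf)$. I define a direction matrix $\Dbf$ by $D_{i_k j_k} = +1$ (forward edges) and $D_{i_{k+1} j_k} = -1$ (backward edges, indices mod $K$), with zeros elsewhere. Because the vertices of the cycle are distinct, each $i_k$ meets exactly one $+1$ and one $-1$ in its row, and likewise each $j_k$ in its column. Hence $\Dbf \one_m = 0$ and $\Dbf^\top \one_n = 0$. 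Now take $\varepsilon = \min$ of $P_{ij}$ over the cycle edges, which is $>0$. Then $\Pbf \pm \varepsilon \Dbf$ are nonnegative and have the right marginals, so they lie in $\Pi(\abf,\bbf)$. They are distinct, and $\Pbf = \tfrac12(\Pbf+\varepsilon\Dbf) + \tfrac12(\Pbf-\varepsilon\Dbf)$, so $\Pbf$ is not extreme.

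\emph{Not extreme $\Rightarrow$ cycle.} Suppose $\Pbf = (1-t)\Ybf + t\Zbf$ with $0<t<1$, $\Ybf,\Zbf \in \Pi(\abf,\bbf)$ and $\Ybf \neq \Zbf$. Nonnegativity forces $\supp(\Ybf), \supp(\Zbf) \subseteq \supp(\Pbf)$. So $\Dbf \triangleq \Ybf - \Zbf \neq 0$ is supported in $\supp(\Pbf)$ and has zero row and column sums. Let $H$ be the subgraph of $G(\Pbf)$ whose edges are $\supp(\Dbf)$, which is nonempty.

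The key step is that every vertex of $H$ with positive degree has degree at least $2$. Indeed, if a row $i$ contained a single nonzero entry $D_{ij}$, the row sum would equal $D_{ij} \neq 0$, a contradiction; the same argument applies to columns. A finite graph with at least one edge and minimum degree $\geq 2$ on its non-isolated vertices contains a cycle. To see this, walk from any edge without immediately backtracking, which is always possible since degrees are at least $2$. By finiteness some vertex repeats, and the segment between its first two occurrences is a cycle with distinct interior vertices. This cycle lies in $G(\Pbf)$, which concludes the proof.

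The main subtlety I expect is bookkeeping rather than depth. In the first direction, one must check that vertex distinctness guarantees each row and column of $\Dbf$ receives exactly one $+1$ and one $-1$, so the marginals are preserved. In the second, the cycle-extraction argument must produce a genuine cycle in the sense defined (length $\geq 4$ in a bipartite graph, with no repeated edge), which the no-backtracking walk ensures.
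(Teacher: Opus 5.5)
Your proof is correct and follows essentially the same route as the paper. The alternating $\pm\varepsilon$ perturbation along a cycle shows that a cycle prevents extremality; conversely, the difference $\mathbf{Y}-\mathbf{Z}$ of the two couplings is nonzero, supported in $\supp(\Pbf)$, and has zero row and column sums, which forces a cycle. Your minimum-degree-two, no-backtracking walk is a graph-theoretic phrasing of the paper's alternating row/column walk, and extracting the cycle at the first repeated vertex (rather than requiring a return to $i_1$ or $j_1$, as the paper loosely states) is, if anything, slightly more careful.
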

\begin{proof}
    We first prove the direction ``$G$ has no cycle $\implies$ $\Pbf$ is an extreme point''. We prove it by contraposition. 
    Suppose that $\Pbf$ is not an extreme point: 
    there exists $\Pbf_1 \neq \Pbf_2 \in \Pi(\abf, \bbf)$ and $t\in (0,1)$ such that $\Pbf = (1-t) \Pbf_1 + t\Pbf_2$. 
    Taking $(i,j) \not\in \supp(\Pbf)$ implies that $ 0 = (1-t) [\Pbf_1]_{ij} + t[\Pbf_2]_{ij} \implies [\Pbf_1]_{ij} = [\Pbf_2]_{ij} = 0$. 
    Now consider $\Hbf = \Pbf_2 - \Pbf_1 \neq 0$, the previous reasoning implies that $\supp(\Hbf) \subseteq \supp(\Pbf)$. 
    
    Since $\Hbf \neq 0$ we can consider $(i_1, j_1)$ such that $H_{i_1 j_1} \neq 0$. 
    Looking at the line $i_1$ we have, since $\Pbf_1, \Pbf_2 \in \Pi(\abf, \bbf)$,  $\sum_j H_{i_1 j} = 0$ 
    thus there exists $j_2 \neq j_1$ such that $H_{i_1 j_2} \neq 0$. 
    We can do exactly the same for the column corresponding to $j_2$: we obtain a 
    $H_{i_2 j_2} \neq 0$ with $i_2 \neq i_1$. 
    We iterate this process and obtain a sequence $(i_1, j_1), (i_1, j_2), (i_2, j_2), \cdots$, 
    each in the support of $\Hbf$ and thus $\Pbf$.
    The size $N$ of this sequence is arbitrary, but since $\integ{n} \times \integ{m}$
    is finite there must be an $N$ such that $i_N = i_1$ or $j_N = j_1$. 
    Thus, there must be a cycle in the support of $\Pbf$.

    We now prove the converse, we follow the same proof as \citet[Proposition 3.3]{peyre2019computational}. 
    Consider $\Pbf$ an extreme point. Suppose by contradiction that $G$ has a cycle. 
    Consider a 3-cycle $i_1, j_1, i_2,j_2, i_3, j_3, i_1$ as illustrated in Figure \ref{fig:graphfig} 
    (any other cycle with arbitrary length can be treated the same way).
    It corresponds to a set of edges $S=\{(i_1, j_1), (i_2, j_1), (i_2, j_2), \cdots, (i_1, j_3)\}$ in $\supp(\Pbf)$.
    As shown in this figure, on this cycle we mark the $i\to j$ as forward edges, and the $j \to i$ as backward edges.
    We consider a matrix $\Ebf$ defined as 
    \begin{equation}
        \label{eq:perturbation}
        E_{ij} = \begin{cases}
            &+ 1 \text{ if } (i,j) \text{ is a forward edge}\,, \\
            &- 1 \text{ if } (i,j) \text{ is a backward edge}\,, \\
            &0 \text{ otherwise }\,.
        \end{cases}
    \end{equation}
    Since this is a cycle, there are as many forward and backward edges, 
    and any node on this cycle receives exactly one $+1$ and one $-1$.
    Consequently, $\Ebf \one_n = 0 , \Ebf^\top \one_m = 0$.
    Now, for some sufficiently small $\varepsilon > 0$, define
    \[
    \Pbf_1 = \Pbf + \varepsilon \Ebf, \quad \Pbf_2 = \Pbf - \varepsilon \Ebf\,,
    \]
    so that $\Pbf = \frac{\Pbf_1 + \Pbf_2}{2}$. 
    Since the matrix $\Ebf$ has row and column sums equal to zero, 
    both $\Pbf_1$ and $\Pbf_2$ share the same marginals as $\Pbf$. 
    By choosing $0 < \varepsilon < \min_{(i,j) \in S} P_{ij}$, we ensure that $\Pbf_1$ and $\Pbf_2$ r
    emain nonnegative and hence valid coupling matrices. 
    This shows that $\Pbf$ is not an extreme point, yielding a contradiction.

\end{proof}

This property of coupling matrices, together with the cyclical monotonicity discussed earlier, 
lead to the following result: some optimal plans in linear OT are both sparse and correspond to couplings that are extreme points of $\Pi(\abf, \bbf)$.

\begin{proposition}
    \label{prop:exists_sparse_coupling}
    For any cost $\Cbf$, there exists an optimal coupling $\Pbf \in \Pi(\abf, \bbf)$ for problem \eqref{eq:linear_ot} 
    that is an extreme point of $\Pi(\abf, \bbf)$. It satisfies $\card(\supp(\Pbf)) \leq n+m-1$. 
\end{proposition}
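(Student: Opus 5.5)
Start from an arbitrary optimal coupling $\Pbf^\star$ (one exists because $\Pi(\abf,\bbf)$ is compact and nonempty, containing $\abf\bbf^\top$, and the objective is continuous). The plan is to remove cycles from $G(\Pbf^\star)$ one at a time without increasing the cost, then appeal to Proposition~\ref{prop:nocycleareextreme}.

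Suppose $G(\Pbf)$ contains a cycle for some optimal $\Pbf$. Build the perturbation matrix $\Ebf$ of \eqref{eq:perturbation} on that cycle, with forward edges $+1$ and backward edges $-1$. Its row and column sums vanish, so $\Pbf \pm \varepsilon\Ebf \in \Pi(\abf,\bbf)$ for small $\varepsilon>0$. The cost is linear, so $\langle \Cbf, \Pbf \pm \varepsilon \Ebf\rangle = \langle \Cbf,\Pbf\rangle \pm \varepsilon\langle \Cbf,\Ebf\rangle$. Optimality of $\Pbf$ in both directions then forces $\langle \Cbf, \Ebf\rangle = 0$. (Equivalently, this is cyclical monotonicity applied to the cycle in both orientations.) Hence every $\Pbf + t\Ebf$ that stays nonnegative is still optimal.

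Next, push $t$ to its extreme. Let $t^\star = \min\{P_{ij} : E_{ij} = -1\}$, which is positive. The matrix $\Pbf' = \Pbf + t^\star \Ebf$ is nonnegative, has the right marginals, and is optimal. It has at least one fewer nonzero entry, because the minimizing backward edge becomes zero, and it creates no new support since $\supp(\Ebf)\subseteq\supp(\Pbf)$. Iterating strictly decreases $\card(\supp)$, so the process terminates at an optimal $\Pbf$ whose graph has no cycle. By Proposition~\ref{prop:nocycleareextreme}, this $\Pbf$ is an extreme point.

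For the cardinality bound, $G(\Pbf)$ is a forest on $n+m$ vertices, so it has at most $n+m-1$ edges. This is the standard fact that an acyclic graph with $c\ge 1$ connected components on $V$ vertices has exactly $V-c$ edges. Since $E=\supp(\Pbf)$, we get $\card(\supp(\Pbf))\le n+m-1$.

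The main point to check is the cycle-cancellation step. The hypothesis must be optimality of $\Pbf$, not extremality, and the argument relies on $\langle\Cbf,\Ebf\rangle=0$ together with support never growing. Both hold because the objective is linear and $\supp(\Ebf)\subseteq\supp(\Pbf)$. Everything else is routine.
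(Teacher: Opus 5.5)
Your proof is correct and is essentially the paper's argument. You cancel a cycle of $G(\Pbf)$ by shifting the smallest backward-edge mass along $\Ebf$, which keeps the plan optimal and strictly shrinks the support, and then you invoke Proposition~\ref{prop:nocycleareextreme} and the forest edge count. The only differences are cosmetic: you iterate rather than start from a minimal-support optimal plan, and you obtain $\langle \Cbf,\Ebf\rangle = 0$ from two-sided optimality of $\Pbf\pm\varepsilon\Ebf$ rather than citing cyclical monotonicity, which makes that step self-contained.
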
 

\begin{proof}
Consider $\Pbf$ an optimal coupling with the \emph{smallest support}.
We will show that the corresponding graph has no cycle, and so it will be an extreme point by Proposition \ref{prop:nocycleareextreme}. 
We will conclude that $\card(\supp(\Pbf)) \leq n + m -1$.

Suppose that there is a cycle with length $k = 3$ in the support of $\Pbf$ as in Figure~\ref{fig:graphfig} 
$i_1 ,j_1 ,i_2 ,j_2 ,i_3 ,j_3 ,i_1$ (again, any longer cycle for $k \neq 3$ can be treated similarly).
We consider the perturbation $\Ebf$ as in the previous proof, by marking as forward the $i \to j$ edges and as backward the $j \to i$ edges,
with $\varepsilon = \min_{(i,j) \in B} P_{ij} > 0$ where $B$ is the set of \emph{backward edges} corresponding the cycle.
We define $\Qbf = \Pbf + \varepsilon\Ebf$.
With the same arguments as the previous proof, $\Qbf \in \Pi(\abf,\bbf)$ since $\Ebf \one_n = 0 , \Ebf^\top \one_m = 0$ 
and the fact that it is nonnegative (indeed for $(i,j)$ in forward edges $\varepsilon E_{ij} > 0$ and for $(i,j)$ in backward edges 
$P_{ij} + \varepsilon E_{ij} = P_{ij} - \varepsilon \geq 0$ since $\varepsilon$ is the smallest $P_{ij}$ among backward edges).

Moreover,
\begin{equation}
\langle \Cbf, \Qbf \rangle - \langle \Cbf, \Pbf \rangle = \sum_{ij} C_{ij} E_{ij}
= \varepsilon \Big(
C_{i_1, j_1} + C_{i_2, j_2} + C_{i_3, j_3}
- C_{i_2, j_1} - C_{i_3, j_2} - C_{i_1, j_3}
\Big).
\end{equation}
The RHS quantity is of the form $\sum_k C_{i_k, j_k} - \sum_k C_{i_{k+1}, j_k}$ with $(i_k, j_k)$ in the support. 
By cyclical monotonicity of the transport plan, this is $\leq 0$, hence $\Qbf$ is also an optimal coupling.
However, $\Qbf$ has strictly fewer strictly positive entries than $\Pbf$: the entries
$Q_{ij}$ where the minimum $\min_{(i,j) \in B} P_{ij}$ is attained become zero.
This is a contradiction since $\Pbf$ has the smallest support. 
Thus, the graph $G(\Pbf)$ has no cycle.

Finally, a bipartite graph with no cycle has less than  $n+m-1$ edges.
Indeed, start with $n+m$ isolated vertices, so with a graph with $n+m$ components.  
Each new added edge either forms a cycle or connects two components. 
Since cycles are forbidden, each edge reduces the number of components by $1$. 
After $n+m-1$ edges there is a single component; adding another edge would create a cycle.

\end{proof}

This property lies at the heart of discrete algorithms for solving OT, 
such as the network simplex method. 
The key idea is to restrict attention to sparse transport plans—specifically, 
those whose support graphs contain no cycles—throughout the iterative optimization process. 
By focusing on such acyclic, sparse plans, these algorithms can efficiently navigate the feasible set while maintaining optimality 
(see discussions in \citealt[Chapter 3]{peyre2019computational}).

\subsection{Tightness of the coupling relaxation} 

The final important property I want to discuss concerns 
the special case of uniform weights, that is when $n = m$ and $\abf = \bbf = \frac{1}{n}\one_n$. 
In this setting, one can equivalently search for a permutation matrix instead of a general coupling, 
a formulation known as the Monge problem. 
A fundamental result, guaranteed by Birkhoff's theorem, is that these two formulations are equivalent, as I detail below.
\begin{theorem}[Birkhoff]
    \label{theo:Birkhoff}
    Extreme points of $\Pi(\frac{1}{n}\one_n, \frac{1}{n}\one_n)$ are the permutation matrices $\Perm(n)$.
\end{theorem}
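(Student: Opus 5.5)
The natural route is to use Proposition~\ref{prop:nocycleareextreme}, which characterizes extreme points of $\Pi(\abf,\bbf)$ as the couplings whose bipartite graph $G(\Pbf)$ has no cycle. The proof then has two inclusions. First, every element of $\Perm(n)$ is an extreme point. Second, every extreme point of $\Pi(\frac{1}{n}\one_n,\frac{1}{n}\one_n)$ lies in $\Perm(n)$.

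For the first inclusion, take $\Pbf\in\Perm(n)$ associated with $\sigma$. Then $G(\Pbf)$ has exactly the edges $(i,\sigma(i))$, so every vertex has degree one. A cycle would need every vertex on it to have degree at least two, so there is no cycle, and Proposition~\ref{prop:nocycleareextreme} shows $\Pbf$ is extreme. It is also immediate that $\Pbf\in\Pi(\frac{1}{n}\one_n,\frac{1}{n}\one_n)$, since each row and each column contains exactly one entry $\frac1n$.

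For the converse, let $\Pbf$ be an extreme point, so $G(\Pbf)$ is acyclic, i.e.\ a forest on $2n$ vertices. The key step is to show that every vertex of $G(\Pbf)$ has degree exactly one.
\begin{itemize}
\item No vertex is isolated, because all row and column sums equal $\frac1n>0$.
\item Take any connected component, with $r$ source vertices and $s$ target vertices. Summing the entries of $\Pbf$ over its edges gives the total mass of the sources in it, $\frac{r}{n}$, and also the total mass of its targets, $\frac{s}{n}$. Hence $r=s$.
\item A tree on $2r$ vertices has $2r-1$ edges, and every edge joins a source to a target.
\item Now use a leaf. A leaf is an endpoint of exactly one edge, so if, say, source $i$ is a leaf with unique neighbour $j$, then $P_{ij}=\frac1n$. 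Since column $j$ also sums to $\frac1n$, every other entry of column $j$ is zero. So $j$ has degree one too, and $\{i,j\}$ is a whole component.
\item Every tree with at least one edge has a leaf, so each component is a single edge $\{i,j\}$ with $P_{ij}=\frac1n$.
\end{itemize}
This argument never needed the balance $r=s$ from the second bullet; the leaf argument suffices on its own.

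It follows that the edges of $G(\Pbf)$ form a perfect matching $i\mapsto\sigma(i)$ with weights $\frac1n$, so $\Pbf\in\Perm(n)$. The only real subtlety is the component and leaf argument: one must check that a leaf forces its edge to carry the full mass $\frac1n$, and that this then saturates the neighbour's marginal. With uniform marginals this is immediate, because every marginal equals the same value $\frac1n$.
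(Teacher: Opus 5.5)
Your proof is correct. The first inclusion is the same as in the paper. The converse follows a different route.

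The paper does not invoke ``extreme $\Rightarrow$ acyclic'' from Proposition~\ref{prop:nocycleareextreme} as a black box. It assumes an extreme $\Pbf$ is not in $\Perm(n)$ and picks a row $i_1$ with two support entries, each therefore carrying mass $<\frac1n$. It then follows an alternating walk through unsaturated rows and columns until finiteness forces the walk to close into a cycle. Finally it re-runs the $\pm\varepsilon\Ebf$ perturbation on that cycle to write $\Pbf$ as a midpoint.

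You instead use the proposition to get a forest and then peel leaves. A leaf's unique edge must carry its whole marginal $\frac1n$, which saturates the neighbour, so every component is a single edge. Your version is shorter and fully rigorous, since its only combinatorial input is that a finite tree with an edge has a leaf. The paper's walk-closing step (``make a drawing'', ``iterate the process'') is left informal by comparison. Your argument also carries over to general marginals, where peeling leaves shows that the entries of an extreme point are signed sums of entries of $\abf$ and $\bbf$; this is the usual integrality argument.

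The paper's route, in exchange, directly exhibits why a non-permutation plan is not extreme. Fractional mass ($P_{ij}<\frac1n$) forces further support entries of mass $<\frac1n$ at both endpoints, so it propagates along the support until it closes into a cycle.

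As you note yourself, the balance bullet ($r=s$) and the edge count play no role and can be dropped.
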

\begin{proof}
First, if $\Pbf \in \Perm(n)$, then $\Pbf \in \Pi(\frac{1}{n}\one_n, \frac{1}{n}\one_n)$ 
and it is clear that the graph associated to $\supp(\Pbf)$ has no cycle (it is a permutation matrix, only one nonzero per line/column). 
Moreover, by Proposition \ref{prop:nocycleareextreme} we know that $\Pbf$ 
is an extreme point of $\Pi(\frac{1}{n}\one_n, \frac{1}{n}\one_n)$.

Conversely, we want to show that any extreme point of $\Pi(\frac{1}{n}\one_n, \frac{1}{n}\one_n)$ 
is a permutation matrix. 
The proof is a small adaptation of the proof of \citet[Theorem 2]{peyre2025optimal}. 

Consider $\Pbf \in \Pi(\frac{1}{n}\one_n, \frac{1}{n}\one_n)$ an extreme point. 
Suppose that it is not a permutation matrix. 
So there must be indices $(i_1, j_1), (i_1, j_2)$ with $j_1 \neq j_2$ in the support of $\Pbf$. 
Moreover, at this node $i_1$, we have $P_{i_1 j_1} < \frac{1}{n}$ otherwise $P_{i_1 j_2}$ would be zero (since in this case 
$P_{i_1 j_1} = \frac{1}{n}$ and all the mass would have been sent).
Thus, there must be an index $i_2 \neq i_1$ such that $P_{i_2 j_1} > 0$ (since $j_1$ does not receive enough mass). 
Similarly, there must be an index $i_3 \neq i_1, P_{i_3 j_2} > 0$. 

Now we have two pairs $(i_1, j_2), (i_3, j_2)$ with $i_1 \neq i_3$ in the support of $\Pbf$.
If $i_3 = i_2$ we have a cycle (make a drawing). If $i_3 \neq i_2$, then, from the same reasoning, 
$i_2$ must send mass to some $j_3$ and $i_3$ must send mass to some $j_4$: if $j_3 = j_4$ we have a cycle, otherwise we can iterate the process.
Since the graph has a finite number of vertices, 
there is a number of steps $N$ that necessarily leads to a cycle $i_1,j_1, \cdots, i_N, j_N, i_{N+1} = i_1$. 

This cycle can be used to split the graph into two set of edges and construct $\Pbf_1 \neq \Pbf_2 \in \Pi(\frac{1}{n}\one_n, \frac{1}{n}\one_n)$ 
such that $\Pbf = \frac{1}{2}(\Pbf_1 + \Pbf_2)$, contradicting the hypothesis that $\Pbf$ is an extreme point. 
These matrices can be obtained exactly as in the proof of Proposition \ref{prop:nocycleareextreme}: 
we mark forward and backward edges with $+1$ and $-1$, and we define $\Ebf$ as in \eqref{eq:perturbation} 
with $\varepsilon$ sufficiently small.
\end{proof}

Combining Proposition \ref{prop:exists_sparse_coupling} with this theorem yields the well-known result often summarized as ``Monge = Kantorovich'' result:
\begin{corollary}
 Let $n=m, \ \abf=\bbf =\frac{1}{n}\one_n$. There exists an optimal solution of \eqref{eq:linear_ot} that solves 
     $\min_{\Pbf \in \Perm(n)} \langle \Cbf, \Pbf \rangle$ and this quantity is equal to $\OT(\Cbf, \abf,\bbf)$.   
\end{corollary}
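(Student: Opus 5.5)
The plan is to combine Proposition \ref{prop:exists_sparse_coupling} with Theorem \ref{theo:Birkhoff}, and to close with a sandwich argument that turns existence into equality of the two optima.

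\textbf{Step 1: an optimal permutation exists.} Set $\abf = \bbf = \frac{1}{n}\one_n$. Proposition \ref{prop:exists_sparse_coupling} gives an optimal coupling $\Pbf^\star$ of \eqref{eq:linear_ot} that is an extreme point of $\Pi(\frac{1}{n}\one_n, \frac{1}{n}\one_n)$. By Theorem \ref{theo:Birkhoff}, every extreme point of this set is a permutation matrix, so $\Pbf^\star \in \Perm(n)$.

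\textbf{Step 2: the two minima coincide.} First, every element of $\Perm(n)$ is a coupling, so $\Perm(n) \subseteq \Pi(\frac{1}{n}\one_n, \frac{1}{n}\one_n)$. Minimizing over the smaller set can only increase the value, hence
\[
\min_{\Pbf \in \Perm(n)} \langle \Cbf, \Pbf\rangle \;\geq\; \OT(\Cbf,\abf,\bbf).
\]
Second, $\Pbf^\star \in \Perm(n)$ and $\langle \Cbf, \Pbf^\star\rangle = \OT(\Cbf,\abf,\bbf)$, which gives
\[
\min_{\Pbf \in \Perm(n)} \langle \Cbf, \Pbf\rangle \;\leq\; \OT(\Cbf,\abf,\bbf).
\]
The two inequalities give equality. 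Moreover, $\Pbf^\star$ attains the minimum over $\Perm(n)$, so it is an optimal solution of \eqref{eq:linear_ot} that also solves the Monge problem $\min_{\Pbf \in \Perm(n)} \langle \Cbf, \Pbf\rangle$.

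There is no real obstacle, since all the work is in the two earlier results. The one point to check is that the minimum over $\Perm(n)$ exists. It does, because $\Perm(n)$ is a finite set with $n!$ elements.
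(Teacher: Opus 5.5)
Your proof is correct and follows essentially the same route as the paper. Proposition \ref{prop:exists_sparse_coupling} gives an optimal extreme point, Theorem \ref{theo:Birkhoff} identifies it as a permutation, and $\Perm(n) \subseteq \Pi(\frac{1}{n}\one_n,\frac{1}{n}\one_n)$ supplies the other inequality; you just make explicit the reverse inequality and the existence of the minimum over the finite set $\Perm(n)$, which the paper leaves implicit.
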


\begin{proof}
    First, as any permutation is a valid coupling $\OT(\Cbf, \abf,\bbf) \leq \min_{\Pbf \in \Perm(n)} \langle \Cbf, \Pbf \rangle$. 
    Proposition \ref{prop:exists_sparse_coupling} shows that there exists an optimal solution of \eqref{eq:linear_ot} that is an extreme point, 
    which is a permutation by Birkhoff's theorem. 
\end{proof}

\section{What about GW optimal transport plans ?}

The natural question now is: do these properties extend to the GW problem \eqref{eq:quad_ot}? 
A spoiler: in general, it is much harder to establish such properties for GW, so the answer is usually no. 
Nevertheless, I will describe one sufficient condition, commonly used in the literature, that 
allows similar results to be derived for the GW case.

\subsection{Conditionally negative semi-definite tensor}

This property stems from the observation that the concavity of the GW loss 
can be exploited to derive results about the extremality of its solutions. 
It was first formally introduced for GW in \citet{sejourne2021unbalanced} 
and has since been applied in works such as 
\citet{beier2023multi, memoli2024comparison, dumont2025existence, assel2025distributional, houry2026gromov}. 
It corresponds to a particular structure on the 4D tensor $\Lbf$. 
The formal definition is given below, and Section \ref{sec:sec_cnd} will discuss in detail the conditions under which this property holds.

\begin{definition}
We say that a symmetric 4D tensor $\Lbf$ is conditionally negative semi-definite (CND) with respect to 
$\overline{\Pi} \triangleq \Pi(\abf, \bbf)-\Pi(\abf, \bbf)=\{\Pbf_1-\Pbf_2, \ (\Pbf_1, \Pbf_2) \in \Pi(\abf, \bbf) \times \Pi(\abf, \bbf)\}$ if
\begin{equation}
\forall \Qbf \in \overline{\Pi}, \  \langle \Lbf \otimes \Qbf, \Qbf\rangle \leq 0\,.
\end{equation}
\end{definition}
As suggested above, the lemma below shows that it is exactly 
a reformulation of the fact that the GW loss is concave.
\begin{lemma}
    \label{lemma:proof_cnd}
    The 4D tensor $\Lbf$ is CND with respect to $\overline{\Pi}$ if and only if $f: \Pbf \in \Pi(\abf, \bbf) \to \langle \Lbf \otimes \Pbf, \Pbf\rangle$ is concave, 
    that is, the GW loss function is concave on $\Pi(\abf, \bbf)$.
\end{lemma}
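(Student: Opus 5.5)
The plan is to reduce concavity of $f$ to a sign condition along line segments, via an exact second-order expansion of the quadratic form. Fix $\Pbf_1, \Pbf_2 \in \Pi(\abf,\bbf)$ and $t \in [0,1]$, and set $\Qbf = \Pbf_1 - \Pbf_2 \in \overline{\Pi}$. By definition, $f$ is concave on the convex set $\Pi(\abf,\bbf)$ if and only if
\[
f((1-t)\Pbf_2 + t\Pbf_1) \geq (1-t) f(\Pbf_2) + t f(\Pbf_1)
\]
for all such choices. Both sides can be written in terms of the bilinear form $B(\Xbf,\Ybf) \triangleq \langle \Lbf \otimes \Xbf, \Ybf\rangle = \sum_{ijkl} L_{ijkl} X_{ij} Y_{kl}$. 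Because $\Lbf$ is symmetric ($L_{ijkl}=L_{klij}$), $B$ is symmetric, so $B(\Xbf,\Ybf)=B(\Ybf,\Xbf)$. This symmetry is the only place where the hypothesis on $\Lbf$ enters, and it is what allows the cross terms to be combined.

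The key step is the identity
\[
f((1-t)\Pbf_2 + t\Pbf_1) - (1-t) f(\Pbf_2) - t f(\Pbf_1) = -t(1-t)\, \langle \Lbf \otimes \Qbf, \Qbf\rangle .
\]
To verify it, write $(1-t)\Pbf_2 + t\Pbf_1 = \Pbf_2 + t\Qbf$ and expand by bilinearity and symmetry. This gives
\[
f(\Pbf_2 + t\Qbf) = f(\Pbf_2) + 2t B(\Pbf_2,\Qbf) + t^2 B(\Qbf,\Qbf).
\]
Using $f(\Pbf_1) = f(\Pbf_2 + \Qbf) = f(\Pbf_2) + 2B(\Pbf_2,\Qbf) + B(\Qbf,\Qbf)$, the terms involving $B(\Pbf_2,\Qbf)$ cancel. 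What remains is $(t^2 - t) B(\Qbf,\Qbf)$. This bookkeeping is the only computation, and it is routine.

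Both directions then follow at once.
\begin{itemize}
\item If $\Lbf$ is CND, then $B(\Qbf,\Qbf)\le 0$ and $t(1-t)\ge 0$, so the right-hand side is nonnegative and $f$ is concave.
\item Conversely, suppose $f$ is concave and take any $\Qbf \in \overline{\Pi}$. Write $\Qbf = \Pbf_1 - \Pbf_2$ and choose $t = 1/2$. Concavity gives $-\frac14 B(\Qbf,\Qbf) \ge 0$, that is, $\langle \Lbf \otimes \Qbf, \Qbf\rangle \le 0$.
\end{itemize}

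I do not expect a real obstacle. The only point needing care is that every $\Qbf\in\overline{\Pi}$ arises as the difference of two endpoints of a segment inside $\Pi(\abf,\bbf)$. This holds by the very definition of $\overline{\Pi}$, so the converse direction needs no density or scaling argument.
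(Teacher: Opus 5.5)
Your proof is correct and is essentially the paper's argument: expand the quadratic form along the segment, using the symmetry of $\Lbf$ to merge the cross terms, and read off the remainder $-t(1-t)\langle \Lbf \otimes \Qbf, \Qbf\rangle$. The only difference is that the paper works at $t=1/2$ and relies on the midpoint characterization of concavity, which tacitly uses continuity of $f$; your general-$t$ identity gives full concavity directly and uses $t=1/2$ only for the converse.
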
 
\begin{proof}
    The function $f$ is concave if and only if it satisfies the midpoint inequality
    $f(\frac{\Pbf_1 + \Pbf_2}{2}) \geq \frac{1}{2}(f(\Pbf_1) + f(\Pbf_2))$ for any $\Pbf_1, \Pbf_2 \in \Pi(\abf,\bbf)$.
    However, since $\Lbf$ is symmetric,
    \begin{equation*}
        \begin{split}
        f(\frac{\Pbf_1 + \Pbf_2}{2}) - \frac{1}{2}(f(\Pbf_1) + f(\Pbf_2)) &= \frac{1}{4}  \langle \Lbf \otimes \Pbf_1, \Pbf_1\rangle 
        + \frac{1}{4} \langle \Lbf \otimes \Pbf_2, \Pbf_2\rangle + \frac{2}{4} \langle \Lbf \otimes \Pbf_1, \Pbf_2\rangle \\
        &- \frac{2}{4} \langle \Lbf \otimes \Pbf_1, \Pbf_1\rangle- \frac{2}{4} \langle \Lbf \otimes \Pbf_2, \Pbf_2\rangle \\
        &=\frac{1}{4}(2\langle \Lbf \otimes \Pbf_1, \Pbf_2\rangle - \langle \Lbf \otimes \Pbf_1, \Pbf_1\rangle - \langle \Lbf \otimes \Pbf_2, \Pbf_2\rangle) \\
        &= -\frac{1}{4} \langle \Lbf \otimes \Qbf, \Qbf\rangle\,.
        \end{split}
    \end{equation*}
\end{proof}

Before stating \emph{when} this property holds, we first describe \emph{what} consequences it has for the GW problem.

\subsection{First consequence: sparsity of some optimal plans} 

The key idea is that minimizing a concave function over a bounded convex polytope 
can be achieved by considering \emph{only the extreme points of the polytope}. 
By combining this with the fact that the extreme points of the set of coupling matrices are sparse, 
one can deduce the sparsity of some GW solutions. 

More precisely, let $C \subset \R^d$ be a convex set that can be expressed as the convex hull of its extreme points, 
and let $f: C \to \R$ be a continuous concave function. 
Then there exists an extreme point of $C$ that solves\footnote{This extends to any compact convex set and is known as Bauer's minimum principle.}
\begin{equation*}
\min_{\xbf \in C} \ f(\xbf) \,.
\end{equation*}
Indeed, let $\xbf \in C$ be a minimizer of $f$. Since $\xbf$ lies in the convex hull of extreme points of $C$, by Carathéodory's theorem it can be expressed as a convex combination of at most $d+1$ extreme points:
$\xbf = \sum_{i=1}^{d+1} \lambda_i \xbf_i, \lambda_i \geq 0, \sum_{i=1}^{d+1} \lambda_i = 1$.
By concavity and Jensen's inequality,
\begin{equation*}
f(\xbf) = f\Big(\sum_{i=1}^{d+1} \lambda_i \xbf_i\Big) \ge \sum_{i=1}^{d+1} \lambda_i f(\xbf_i) \ge \min_{i} f(\xbf_i)\,.    
\end{equation*}
Thus, there exists at least one index $i$ such that $f(\xbf_i) = f(\xbf)$, meaning that the corresponding $\xbf_i$, 
an extreme point of $C$, is a minimizer of $f$. 
In particular, this reasoning applies whenever every point in $C$ can be expressed as a convex combination of its extreme points.
The good news is: $C = \Pi(\abf,\bbf)$ is such a set ! 
\begin{proposition}
    \label{prop:coupling_convex}
    Any point $\Pbf \in \Pi(\abf, \bbf)$ can be written as $\Pbf =  \sum_{i=1}^{n} \lambda_i \Pbf_i$ where $n \geq 1, \Pbf_1, \cdots, \Pbf_n$ 
    are extreme points of $\Pi(\abf, \bbf)$ and $\lambda_i \geq 0, \sum_{i=1}^n \lambda_i = 1$.
\end{proposition}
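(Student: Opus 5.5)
I will prove this by strong induction on the size of the support, $s(\Pbf) \triangleq \card(\supp(\Pbf))$, reusing the cycle-perturbation idea from the proofs of Proposition~\ref{prop:nocycleareextreme} and Proposition~\ref{prop:exists_sparse_coupling}.

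\emph{Base case and easy case.} If $G(\Pbf)$ has no cycle, then $\Pbf$ is itself an extreme point by Proposition~\ref{prop:nocycleareextreme}, and we take the trivial combination $\Pbf = 1\cdot \Pbf$. This covers the smallest possible supports. Every coupling has support size at least $\max(n,m)$, since each row of $\Pbf$ has a nonzero entry.

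\emph{Inductive step.} Suppose $G(\Pbf)$ contains a cycle $i_1, j_1, i_2, j_2, \dots, i_k, j_k, i_1$. I build the matrix $\Ebf$ exactly as in \eqref{eq:perturbation}, with entries $+1$ on forward edges and $-1$ on backward edges. It satisfies $\Ebf\one = 0$, $\Ebf^\top \one = 0$ and $\supp(\Ebf) \subseteq \supp(\Pbf)$. I then consider the line $t \mapsto \Pbf + t\Ebf$ and set
\[
t_+ = \min_{(i,j)\in B} P_{ij} > 0, \qquad t_- = \min_{(i,j)\in F} P_{ij} > 0,
\]
where $B$ and $F$ are the backward and forward edges of the cycle. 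Define $\Pbf^+ = \Pbf + t_+\Ebf$ and $\Pbf^- = \Pbf - t_-\Ebf$. As in the proof of Proposition~\ref{prop:exists_sparse_coupling}, both matrices lie in $\Pi(\abf,\bbf)$. Their supports are contained in $\supp(\Pbf)$, and each loses at least one entry, namely the one where the corresponding minimum is attained. Hence $s(\Pbf^\pm) < s(\Pbf)$. Moreover, $\Pbf$ lies on the segment between them:
\[
\Pbf = \frac{t_-}{t_+ + t_-}\,\Pbf^+ + \frac{t_+}{t_+ + t_-}\,\Pbf^-,
\]
which one checks directly by expanding the right-hand side. By the induction hypothesis, each of $\Pbf^\pm$ is a finite convex combination of extreme points. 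Substituting these two combinations gives a convex combination for $\Pbf$: the weights are products of nonnegative numbers and still sum to $1$.

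\emph{Main obstacle.} The only delicate point is setting up the induction correctly. The induction must run over support size rather than over dimension, so that both endpoints are guaranteed to be strictly "simpler" than $\Pbf$. I also need to check that the endpoints remain nonnegative. This is guaranteed by the choice of $t_\pm$ as minima over the decreasing edges: the backward edges for $\Pbf^+$ and the forward edges for $\Pbf^-$. Termination is automatic because $s(\Pbf)$ is a positive integer that strictly decreases at each step. The number of extreme points obtained is finite; it can be reduced to at most $nm+1$ by Carathéodory's theorem if desired, though this is not needed for the statement.
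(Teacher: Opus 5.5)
Your proof is correct and is essentially the paper's argument: the same cycle perturbation $\Ebf$, the same two step sizes (your $t_+, t_-$ are the paper's $\varepsilon^-, \varepsilon^+$) and the same convex weights. Your strong induction on $\card(\supp(\Pbf))$ simply makes rigorous the paper's ``iterate until there is no cycle anymore.'' One side remark is wrong but harmless: every coupling need not have support of size at least $\max(n,m)$, since this fails when $\abf$ or $\bbf$ has zero entries, and nothing in your argument uses it.
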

\begin{proof}
To prove this result, one could appeal to general theorems about bounded convex polytopes, but here we provide a constructive proof. 
If $\Pbf$ is already an extreme point, the statement is immediate. 
Otherwise, suppose $\Pbf$ is not an extreme point; the proof then proceeds in a manner very similar to the previous arguments.
From Proposition \ref{prop:nocycleareextreme}, then the graph $G(\Pbf)$ contains a cycle.
Consider the 3-cycle $i_1, j_1, i_2,j_2, i_3, j_3, i_1$ in Figure \ref{fig:graphfig} (any longer cycle leads to the same idea).
We mark again the forward and backward edges as in the figure and consider 
$\varepsilon^- = \min_{(i,j) \in B} P_{ij} > 0$ and $\varepsilon^+ = \min_{(i,j) \in F} P_{ij} > 0$ where $B, F$ are the sets of backward and forward 
edges and $\Ebf$ as in \eqref{eq:perturbation}.
Now we define
\begin{equation}
    \Pbf_1 = (\Pbf + \varepsilon^- \Ebf), \ \Pbf_2 = (\Pbf - \varepsilon^+ \Ebf), \ \lambda = \frac{\varepsilon^+}{\varepsilon^+ + \varepsilon^-}\,,
\end{equation}
such that $1- \lambda = \frac{\varepsilon^-}{\varepsilon^+ + \varepsilon^-}$. 
With similar reasoning as before we can check that $\Pbf_1, \Pbf_2$ have the same marginals as $\Pbf$ and are both nonnegative. 
Also, $\Pbf = \lambda \Pbf_1 + (1-\lambda) \Pbf_2$.
The crucial point is that we have removed at least one edge in each $\Pbf_1$ and $\Pbf_2$; that is 
$\card(\supp(\Pbf_1)), \card(\supp(\Pbf_1)) < \card(\supp(\Pbf))$.
In $\Pbf_1$ we removed the backward edges corresponding to $\min_{(i,j) \in B} P_{ij}$ and in $\Pbf_2$ the forward edges corresponding to $\min_{(i,j) \in F} P_{ij}$.
If $\Pbf_1$ and $\Pbf_2$ do not have a cycle we are done. 
Otherwise, we can iterate the process on $\Pbf_1, \Pbf_2$ until there is no cycle anymore. In the end we end up with $\Pbf = \sum_i \lambda_i \Pbf_i$ 
with all the $\Pbf_i$ that have no cycle, thus are extreme by Proposition \ref{prop:nocycleareextreme}. 
\end{proof}

Using this result, together with the earlier reasoning on concave functions, we can conclude that some GW optimal plans are sparse.
\begin{corollary}
    \label{corr:firstcorrgromov}
    When the 4D tensor $\Lbf$ is CND with respect to $\overline{\Pi}$, 
    there exists an optimal solution $\Pbf$ of problem \eqref{eq:quad_ot} which is an extreme point of $\Pi(\abf,\bbf)$ 
    and with $\card(\supp(\Pbf)) \leq n+m -1$.
\end{corollary}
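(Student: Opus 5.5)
The plan is to combine three ingredients already established: the concavity of the GW objective under the CND assumption (Lemma \ref{lemma:proof_cnd}), the decomposition of any coupling into extreme points (Proposition \ref{prop:coupling_convex}), and the cycle characterization of extreme points (Proposition \ref{prop:nocycleareextreme}) together with the edge-counting argument from Proposition \ref{prop:exists_sparse_coupling}.

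First, I need an optimal solution to exist. The objective $f(\Pbf) = \langle \Lbf \otimes \Pbf, \Pbf\rangle$ is a polynomial in the entries of $\Pbf$, hence continuous. The set $\Pi(\abf,\bbf)$ is closed and bounded: entries lie in $[0,1]$, and the set is cut out by linear equalities and nonnegativity constraints. It is also nonempty, since $\abf\bbf^\top$ belongs to it. So a minimizer $\Pbf^\star$ exists.

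Next, since $\Lbf$ is CND, Lemma \ref{lemma:proof_cnd} gives that $f$ is concave on $\Pi(\abf,\bbf)$. By Proposition \ref{prop:coupling_convex}, I write $\Pbf^\star = \sum_i \lambda_i \Pbf_i$ with each $\Pbf_i$ an extreme point, $\lambda_i \ge 0$ and $\sum_i \lambda_i = 1$. I then apply the Jensen argument given before Proposition \ref{prop:coupling_convex}. This argument only uses concavity on the convex set and a finite convex combination, so Carathéodory is not even needed. It gives
\[
f(\Pbf^\star) \ge \sum_i \lambda_i f(\Pbf_i) \ge \min_i f(\Pbf_i).
\]
Optimality of $\Pbf^\star$ forces equality for some index $i$, so that $\Pbf_i$ is an optimal extreme point.

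Finally, for the sparsity bound, Proposition \ref{prop:nocycleareextreme} says that $G(\Pbf_i)$ has no cycle. The component-counting argument at the end of the proof of Proposition \ref{prop:exists_sparse_coupling} then shows that an acyclic graph on $n+m$ vertices has at most $n+m-1$ edges. Hence $\card(\supp(\Pbf_i)) \le n+m-1$.

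The only mildly delicate step is the Jensen step. Concavity was obtained through the midpoint inequality, so I should make sure it yields the finite-combination inequality. Here this is immediate, because $f$ is a quadratic form restricted to an affine set. Indeed, $t \mapsto f((1-t)\Pbf_1 + t\Pbf_2)$ is a quadratic in $t$ whose leading coefficient is $\langle \Lbf\otimes \Qbf,\Qbf\rangle \le 0$, where $\Qbf = \Pbf_2 - \Pbf_1$. This gives full concavity directly, and the finite Jensen inequality follows by induction on the number of terms.
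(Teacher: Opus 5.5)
Your proof is correct and follows the same route as the paper: concavity from the CND assumption, the finite extreme-point decomposition of Proposition~\ref{prop:coupling_convex} combined with Jensen's inequality to obtain an optimal extreme point, and then acyclicity via Proposition~\ref{prop:nocycleareextreme} together with the edge-counting argument to get $\card(\supp(\Pbf)) \leq n+m-1$. You also make explicit two points the paper leaves implicit: the existence of a minimizer, by compactness and continuity, and the upgrade from midpoint concavity to full concavity, via the quadratic $t \mapsto f((1-t)\Pbf_1 + t\Pbf_2)$ whose leading coefficient $\langle \Lbf\otimes\Qbf,\Qbf\rangle$ is nonpositive.
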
 
\begin{proof}
    When $\Lbf$ is CND the GW loss is concave and continuous on $\Pi(\abf,\bbf)$. As any coupling can be written as convex combination of extreme points, 
    as detailed in Proposition \ref{prop:coupling_convex}, so there exists an extreme point that is an optimal solution by the previous discussion.
    But as written in the proof of Proposition \ref{prop:exists_sparse_coupling}, 
    since the bipartite graph associated to $\supp(\Pbf)$ has no cycle, $\card(\supp(\Pbf)) \leq n+m -1$.
\end{proof}

\subsection{Second consequence: tightness of the coupling relaxation}

Similarly, when the tensor $\Lbf$ is CND, one can show that the coupling relaxation is tight—that is, 
a ``Monge = Kantorovich''–type result holds for GW. 
This observation was already noted for quadratic programs in the great paper \citet{maron2018probably}. 
By combining the facts that the extreme points of $\Pi\big(\frac{1}{n} \one_n, \frac{1}{n} \one_n\big)$ 
are permutation matrices (Theorem \ref{theo:Birkhoff}) and that at least one extreme point is an optimal solution 
(Corollary \ref{corr:firstcorrgromov}), we obtain:
\begin{corollary}
 Let $n=m, \ \abf=\bbf =\frac{1}{n}\one_n$. Suppose that the 4D tensor $\Lbf$ is CND with respect to $\overline{\Pi}$. 
 There exists an optimal solution of \eqref{eq:quad_ot} that solves 
    $\min_{\Pbf \in \Perm(n)} \langle \Lbf \otimes \Pbf, \Pbf \rangle$ and this quantity is equal to $\GW(\Lbf, \abf,\bbf)$.   
\end{corollary}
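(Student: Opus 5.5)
The plan is to mirror the linear ``Monge = Kantorovich'' corollary, replacing Proposition~\ref{prop:exists_sparse_coupling} with Corollary~\ref{corr:firstcorrgromov}.

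First, I record the trivial inequality. Every $\Pbf \in \Perm(n)$ has marginals $\frac{1}{n}\one_n$, so $\Perm(n) \subseteq \Pi(\frac{1}{n}\one_n, \frac{1}{n}\one_n)$. Taking the minimum over the smaller set gives
\[
\GW(\Lbf,\abf,\bbf) \leq \min_{\Pbf \in \Perm(n)} \langle \Lbf \otimes \Pbf, \Pbf \rangle .
\]
This minimum is attained because $\Perm(n)$ is finite and nonempty.

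Second, I prove the reverse inequality. Since $\Lbf$ is CND with respect to $\overline{\Pi}$, Corollary~\ref{corr:firstcorrgromov} gives an optimal solution $\Pbf^\star$ of \eqref{eq:quad_ot} that is an extreme point of $\Pi(\frac{1}{n}\one_n,\frac{1}{n}\one_n)$. That corollary in turn relies on Lemma~\ref{lemma:proof_cnd} (concavity), Proposition~\ref{prop:coupling_convex}, and the Jensen argument. By Birkhoff's Theorem~\ref{theo:Birkhoff}, every extreme point of this set is in $\Perm(n)$, so $\Pbf^\star \in \Perm(n)$. Hence
\[
\min_{\Pbf \in \Perm(n)} \langle \Lbf \otimes \Pbf, \Pbf \rangle \leq \langle \Lbf \otimes \Pbf^\star, \Pbf^\star \rangle = \GW(\Lbf,\abf,\bbf).
\]
Combining this with the first step shows that the two quantities are equal. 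It also shows that $\Pbf^\star$ attains the minimum over $\Perm(n)$, i.e.\ it is an optimal solution of \eqref{eq:quad_ot} that also solves the permutation problem.

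There is no genuine obstacle, since all the work is in the earlier results. The only point to check is that the hypotheses of Corollary~\ref{corr:firstcorrgromov} hold. The CND assumption is stated for the specific set $\overline{\Pi}$ built from uniform marginals, and $\Lbf$ is assumed symmetric, which is needed in Lemma~\ref{lemma:proof_cnd}. The GW objective is a quadratic polynomial in $\Pbf$, hence continuous, as the Bauer-type argument requires.
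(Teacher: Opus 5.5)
Your proposal is correct and follows essentially the same route as the paper. The paper likewise combines Corollary~\ref{corr:firstcorrgromov} (under CND, some extreme point of $\Pi(\frac{1}{n}\one_n,\frac{1}{n}\one_n)$ is GW-optimal) with Birkhoff's Theorem~\ref{theo:Birkhoff}, and uses the trivial inclusion $\Perm(n)\subseteq\Pi(\frac{1}{n}\one_n,\frac{1}{n}\one_n)$ for the reverse inequality.
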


\subsection{Third consequence: as small detour around the bilinear relaxation}

Another noteworthy consequence of the CND case is that a certain \emph{bilinear relaxation} becomes exact. 
Before wrapping up, we briefly introduce this concept. The bilinear problem, first formally introduced for OT in \citet{titouan2020co}, is formulated as
\begin{equation}
    \label{eq:bilin_ot}
    \tag{BilinOT}
    \min_{\Pbf_1, \Pbf_2 \in \Pi(\abf,\bbf)} \ \langle \Lbf \otimes \Pbf_1, \Pbf_2 \rangle\,.
\end{equation}
In other words, instead of seeking a single global transport plan, we look for two plans that realign the distributions. 
From a numerical standpoint, this can be advantageous because the problem becomes bilinear rather than quadratic, 
which opens the door to algorithms based on linear OT \citep{titouan2020co, sejourne2021unbalanced, beier2023multi}. 
A simple bound shows that this formulation is indeed a relaxation: 
\[
\min_{\Pbf_1, \Pbf_2 \in \Pi(\abf,\bbf)} \langle \Lbf \otimes \Pbf_1, \Pbf_2 \rangle \leq \GW(\Lbf, \abf, \bbf),
\] 
and the natural question is whether this relaxation is tight. In the CND case, the answer is affirmative.
\begin{proposition}
    \label{prop:bilinear_relaxation}
    If the tensor $\Lbf$ is CND with respect to $\overline{\Pi}$ then \eqref{eq:bilin_ot} and \eqref{eq:quad_ot} are equivalent. More precisely, 
    if $(\Pbf_1, \Pbf_2)$ is optimal for \eqref{eq:bilin_ot} then both $\Pbf_1$ and $\Pbf_2$ are optimal solutions for \eqref{eq:quad_ot} 
    and if $\Pbf$ is optimal for \eqref{eq:quad_ot} then $(\Pbf, \Pbf)$ is an optimal solution for \eqref{eq:bilin_ot}.
    In this case, $\GW(\Lbf, \abf, \bbf) = \min_{\Pbf_1, \Pbf_2 \in \Pi(\abf,\bbf)} \ \langle \Lbf \otimes \Pbf_1, \Pbf_2 \rangle$.
\end{proposition}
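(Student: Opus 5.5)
The plan is to use the polarization identity from the proof of Lemma \ref{lemma:proof_cnd}. For $\Pbf_1, \Pbf_2 \in \Pi(\abf,\bbf)$ and $\Qbf = \Pbf_1 - \Pbf_2 \in \overline{\Pi}$, the symmetry of $\Lbf$ gives
\begin{equation*}
\langle \Lbf \otimes \Pbf_1, \Pbf_2\rangle = \frac{1}{2}\Big(\langle \Lbf \otimes \Pbf_1, \Pbf_1\rangle + \langle \Lbf \otimes \Pbf_2, \Pbf_2\rangle - \langle \Lbf \otimes \Qbf, \Qbf\rangle\Big)\,.
\end{equation*}
Under the CND assumption the last term satisfies $-\langle \Lbf \otimes \Qbf, \Qbf\rangle \geq 0$. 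Therefore
\[
\langle \Lbf \otimes \Pbf_1, \Pbf_2\rangle \geq \tfrac{1}{2}\big(f(\Pbf_1) + f(\Pbf_2)\big) \geq \min\{f(\Pbf_1), f(\Pbf_2)\} \geq \GW(\Lbf,\abf,\bbf),
\]
where $f$ denotes the GW loss.

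The second step is to take the minimum over all pairs $(\Pbf_1, \Pbf_2)$. This yields $\min_{\Pbf_1,\Pbf_2} \langle \Lbf \otimes \Pbf_1, \Pbf_2\rangle \geq \GW(\Lbf,\abf,\bbf)$. The reverse inequality is the trivial relaxation bound already noted before the statement: restrict to the diagonal pairs $(\Pbf,\Pbf)$. The two values are therefore equal. Both minima exist because $\Pi(\abf,\bbf)$ is compact and the objectives are continuous.

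The statements about optimal solutions then follow from this equality.
\begin{itemize}
\item If $\Pbf$ is optimal for \eqref{eq:quad_ot}, then $\langle \Lbf \otimes \Pbf, \Pbf\rangle = \GW(\Lbf,\abf,\bbf)$, which equals the bilinear minimum. Hence $(\Pbf,\Pbf)$ is optimal for \eqref{eq:bilin_ot}.
\item If $(\Pbf_1,\Pbf_2)$ is optimal for \eqref{eq:bilin_ot}, the chain of inequalities above collapses into equalities. We get $\GW = \langle \Lbf \otimes \Pbf_1, \Pbf_2\rangle \geq \frac12(f(\Pbf_1)+f(\Pbf_2))$, while each $f(\Pbf_i) \geq \GW$. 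The average of two numbers that are each at least $\GW$ can be at most $\GW$ only if both equal $\GW$. Hence $f(\Pbf_1)=f(\Pbf_2)=\GW(\Lbf,\abf,\bbf)$, so both $\Pbf_1$ and $\Pbf_2$ are GW-optimal.
\end{itemize}

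The only delicate point is the polarization identity itself, which needs the symmetry of $\Lbf$. Symmetry gives $\langle \Lbf\otimes \Pbf_1, \Pbf_2\rangle = \langle \Lbf\otimes \Pbf_2, \Pbf_1\rangle$, so the cross terms in $\langle \Lbf\otimes \Qbf, \Qbf\rangle$ combine into $-2\langle \Lbf\otimes \Pbf_1,\Pbf_2\rangle$. Symmetry is part of the CND definition, so this step is available. The rest is bookkeeping.
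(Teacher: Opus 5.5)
Your proof is correct and follows the paper's argument. Both hinge on $\langle \Lbf \otimes \Pbf_1, \Pbf_2\rangle \geq \frac{1}{2}\big(f(\Pbf_1)+f(\Pbf_2)\big) \geq \GW(\Lbf,\abf,\bbf)$; you get it by applying CND directly to $\Qbf = \Pbf_1-\Pbf_2$, while the paper uses $g(\Pbf_1,\Pbf_2) = \frac{1}{2}\big(f(\Pbf_1+\Pbf_2)-f(\Pbf_1)-f(\Pbf_2)\big)$ with concavity (Lemma~\ref{lemma:proof_cnd}) and degree-two homogeneity, which is the same computation. Your averaging step showing that \emph{both} $\Pbf_1$ and $\Pbf_2$ are GW-optimal fills in a deduction the paper only asserts.
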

\begin{proof}
    In the proof we define $g(\Pbf_1, \Pbf_2) \triangleq \langle \Lbf \otimes \Pbf_1, \Pbf_2 \rangle$ the bilinear loss 
    and $f(\Pbf) \triangleq g(\Pbf, \Pbf)$ the GW loss, which is concave due to the hypothesis (Lemma \ref{lemma:proof_cnd}).
    Moreover, a small calculus shows that
    \begin{equation}
        \label{eq:the_eq_for_bilin}
        g(\Pbf_1, \Pbf_2) = \frac{1}{2}(f(\Pbf_1 + \Pbf_2) - f(\Pbf_1) - f(\Pbf_2))\,.
    \end{equation} 
    Since $f$ is concave it satisfies the midpoint inequality $f(\frac{\Pbf_1 + \Pbf_2}{2})\geq \frac{1}{2}(f(\Pbf_1)+ f(\Pbf_2))$ which gives $f(\Pbf_1 + \Pbf_2) \geq 2(f(\Pbf_1)+ f(\Pbf_2))$.
    Combining with \eqref{eq:the_eq_for_bilin} we get $g(\Pbf_1, \Pbf_2) \geq \frac{1}{2}(f(\Pbf_1)+ f(\Pbf_2)) \geq \min\{f(\Pbf_1), f(\Pbf_2)\} 
    \geq \min_\Pbf f(\Pbf) = \min_\Pbf g(\Pbf, \Pbf) = \GW(\Lbf, \abf, \bbf)$ and thus $\min_{\Pbf_1, \Pbf_2 \in \Pi(\abf,\bbf)} \ \langle \Lbf \otimes \Pbf_1, \Pbf_2 \rangle \geq \GW(\Lbf, \abf, \bbf)$.
    Using the converse inequality shows $\GW(\Lbf, \abf, \bbf) = \min_{\Pbf_1, \Pbf_2 \in \Pi(\abf,\bbf)} \ \langle \Lbf \otimes \Pbf_1, \Pbf_2 \rangle$ and the fact that the solutions are equivalent.
\end{proof}

\subsection{When is the tensor CND ? The case of separable losses \label{sec:sec_cnd}}

Now that I have presented some consequences of the CND case, I will explain when this situation actually occurs.
As written in the introduction, in most of the applications the tensor can be written as $L_{ijkl}= \Lcal\left(C_{ik}, \overline{C}_{jl}\right)$ 
for some loss function $\Lcal : \R \times \R \to\R$.

In fact, a lot of losses $\Lcal$ for GW that are used in practice are \emph{separable}, mainly for practical reasons: 
as described in \citet{peyre2016gromov} this reduces the computation complexity of the GW loss from $\mathcal{O}(n^2 m ^2)$ to $\mathcal{O}(n m^2 + m n^2)$. 
These losses can be written as
\begin{equation}
    \label{eq:separable_loss}
    \Lcal(a, b) = f_1(a) + f_2(b) - h_1(a)h_2(b)\,,
\end{equation}
and they cover a wide range of loss functions.
For instance, they include all \emph{Bregman divergences} that can be written as 
$\Lcal(a, b) = \phi(a) - \phi(b) - \phi'(b)(a-b) \geq 0$ 
for some (strictly) convex and differentiable function $\phi$.
This corresponds to $f_1(a) = \phi(a), \ f_2(b) = - \phi(b) + \phi'(b)b, \ h_1(a) = a, \ h_2(b) = \phi'(b)$.
Notable examples include the squared loss $$\Lcal(a,b) = \Lcal_2(a,b) \triangleq \frac{1}{2}(a-b)^2\,,$$ and the Kullback-Leibler divergence 
$$\Lcal(a,b) =  \Lcal_{\KL}(a,b) \triangleq a \log(a/b) - a + b\,,$$ which corresponds to the Bregman divergence associated to $\phi(x) = x\log(x) - x$.
When the loss is separable the expression for the GW loss simplifies to 
\begin{equation}
    \label{eq:simplification}
    \langle \Lbf \otimes \Pbf, \Pbf\rangle = \langle f_1(\Cbf) \abf \one_m^\top + \one_n \bbf^\top f_{2}(\overline{\Cbf})^\top, \Pbf\rangle - \langle h_{1}(\Cbf) \Pbf h_2(\overline{\Cbf})^\top, \Pbf \rangle\,,
\end{equation}
as shown in \citealt[Proposition 1]{peyre2016gromov}. In \eqref{eq:simplification}, 
the expressions $f_1(\Cbf)$, $f_2(\overline{\Cbf})$, $h_1(\Cbf)$, and $h_2(\overline{\Cbf})$ are to be interpreted component-wise.
The goal of this section is to characterize the CND property for these separable losses. We will use the following definition:
\begin{definition}
    A symmetric matrix $\Cbf \in \R^{n \times n}$ is called conditionally negative semi-definite (\textit{resp.} positive definite), 
    abbreviated as CND (\textit{resp.} CPD), if for any $\ubf \in \R^n \text{ s.t. } \ubf^\top \one_n = 0$ 
    we have $\ubf^\top \Cbf \ubf \leq 0$ (\textit{resp.} $\geq 0$).
\end{definition}
We will also use the following simple result: a matrix is CND if and only if it is negative semi-definite after centering its rows and columns.
\begin{lemma}
    \label{lemma:centering}
    Let $\Hbf_n \triangleq \Ibf_n - \frac{1}{n} \one_n \one_n^\top$ be the centering matrix where $\Ibf_n$ is the $n \times n$ identity matrix. 
    $\Cbf \in \R^{n \times n}$ is CND (\textit{resp.} CPD) if and only if $\Hbf_n \Cbf \Hbf_n$ is negative semi-definite (\textit{resp.} positive semi-definite).
\end{lemma}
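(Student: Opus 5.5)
The plan rests on one structural fact about the centering matrix: $\Hbf_n$ is symmetric, idempotent, and its range is exactly the hyperplane $\{\ubf \in \R^n : \ubf^\top \one_n = 0\}$. Indeed, $\Hbf_n \one_n = 0$ and $\Hbf_n \ubf = \ubf$ whenever $\ubf^\top \one_n = 0$. Both directions of Lemma~\ref{lemma:centering} then follow by transporting quadratic forms through $\Hbf_n$. I treat the CND case; the CPD case is identical after reversing the inequalities, or equivalently by applying the CND case to $-\Cbf$.

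First, I check the basic properties of $\Hbf_n$ directly from the definition $\Hbf_n = \Ibf_n - \frac{1}{n}\one_n\one_n^\top$.
\begin{itemize}
\item It is symmetric.
\item $\one_n^\top \Hbf_n \vbf = \one_n^\top \vbf - \frac{1}{n}\,n\,\one_n^\top \vbf = 0$ for every $\vbf \in \R^n$, so $\Hbf_n \vbf$ always sums to zero.
\item If $\ubf^\top \one_n = 0$, then $\Hbf_n \ubf = \ubf - \frac{1}{n}\one_n(\one_n^\top \ubf) = \ubf$.
\end{itemize}

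For the direction ``$\Cbf$ CND $\implies$ $\Hbf_n \Cbf \Hbf_n$ negative semi-definite'', take an arbitrary $\vbf \in \R^n$ and set $\ubf = \Hbf_n \vbf$. By symmetry of $\Hbf_n$,
\[
\vbf^\top \Hbf_n \Cbf \Hbf_n \vbf = \ubf^\top \Cbf \ubf .
\]
Since $\ubf^\top \one_n = 0$, the CND property gives that this quantity is $\leq 0$. Also, $\Hbf_n \Cbf \Hbf_n$ is symmetric because $\Cbf$ is, so it is negative semi-definite.

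For the converse, take $\ubf$ with $\ubf^\top \one_n = 0$. Then $\ubf = \Hbf_n \ubf$, hence
\[
\ubf^\top \Cbf \ubf = (\Hbf_n \ubf)^\top \Cbf (\Hbf_n \ubf) = \ubf^\top (\Hbf_n \Cbf \Hbf_n) \ubf \leq 0
\]
by negative semi-definiteness. This shows that $\Cbf$ is CND.

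There is no real obstacle here. The only point needing care is to note explicitly that $\Hbf_n$ restricted to the zero-sum hyperplane is the identity and maps all of $\R^n$ onto that hyperplane. That is precisely what allows the inequality to be passed in both directions.
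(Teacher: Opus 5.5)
Your proof is correct and follows the route the paper indicates. The paper's one-line justification is that $\Hbf_n$ maps every vector into the zero-sum hyperplane; you write out both directions and add the observation, needed for the converse, that $\Hbf_n$ acts as the identity on that hyperplane.
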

    The proof is straightforward by using that, for any  $\ubf \in \R^n, (\Hbf \ubf)^\top \one_n = 0$.
With separable losses in mind, we arrive at the following main result, which characterizes the CND property for this class of losses.
\begin{proposition}
    \label{prop:separable_loss_cnd}
    Let $\Lbf$ be a 4D tensor that can be written as $L_{ijkl}= \Lcal\left(C_{ik}, \overline{C}_{jl}\right)$ 
    for a separable loss $\Lcal(a,b) = f_1(a) + f_2(b) - h_1(a)h_2(b)$ and symmetric matrices $\Cbf, \overline{\Cbf}$.
    The following are equivalent:
    \begin{enumerate}[label = (\roman*)]
        \item Then 4D tensor $\Lbf$ is CND with respect to $\overline{\Pi}$.
        \item The GW loss $\Pbf \to \langle \Lbf \otimes \Pbf, \Pbf \rangle$ is concave on $\Pi(\abf,\bbf)$.
        \item $h_1(\Cbf), h_2(\overline{\Cbf})$ are both CND or both CPD matrices.
    \end{enumerate}
\end{proposition}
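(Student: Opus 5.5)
The equivalence (i)$\iff$(ii) is exactly Lemma~\ref{lemma:proof_cnd}, so the plan is to prove (i)$\iff$(iii). I will do this by computing $\langle \Lbf \otimes \Qbf, \Qbf\rangle$ explicitly for $\Qbf \in \overline{\Pi}$. The one property of such $\Qbf$ that I use is that it has zero row and column sums: $\Qbf\one_m = 0$ and $\Qbf^\top \one_n = 0$. Write $\Abf \triangleq h_1(\Cbf)$ and $\Bbf \triangleq h_2(\overline{\Cbf})$, both symmetric.

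\textbf{Step 1: the $f_1, f_2$ terms vanish.} For the $f_1$ term,
\[
\sum_{ijkl} f_1(C_{ik}) Q_{ij}Q_{kl} = \sum_{ik} f_1(C_{ik}) \Big(\sum_j Q_{ij}\Big)\Big(\sum_l Q_{kl}\Big) = 0,
\]
and the same argument with column sums kills the $f_2$ term. This leaves
\[
\langle \Lbf \otimes \Qbf, \Qbf\rangle = -\sum_{ijkl} A_{ik} B_{jl} Q_{ij} Q_{kl} = -\langle \Abf \Qbf \Bbf, \Qbf\rangle .
\]
Since $\Qbf = \Hbf_n \Qbf \Hbf_m$, I can replace $\Abf$ by $\widetilde{\Abf} \triangleq \Hbf_n\Abf\Hbf_n$ and $\Bbf$ by $\widetilde{\Bbf} \triangleq \Hbf_m \Bbf \Hbf_m$. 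So (i) is equivalent to $\langle \widetilde{\Abf}\Qbf\widetilde{\Bbf},\Qbf\rangle \ge 0$ for all $\Qbf\in\overline{\Pi}$.

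\textbf{Step 2: pass from $\overline{\Pi}$ to the full space.} Let $V \triangleq \{\Qbf : \Qbf\one_m=0,\ \Qbf^\top\one_n=0\}$. I want the inequality on all of $V$, and then on all of $\R^{n\times m}$. This is the step I expect to be the main obstacle, because it needs $\overline{\Pi}$ to be "full-dimensional" in $V$.

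My approach is to assume $\abf, \bbf$ have positive entries; zero-mass points can be discarded from the problem beforehand. Then $\Pbf_0 = \abf\bbf^\top$ has all entries positive. For any $\Qbf \in V$ and $t>0$ small, $\Pbf_0 + t\Qbf \in \Pi(\abf,\bbf)$, hence $t\Qbf \in \overline{\Pi}$. The quadratic form is $2$-homogeneous, so the inequality extends from $t\Qbf$ to every $\Qbf \in V$.

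Next, every $\Xbf \in \R^{n\times m}$ satisfies $\Hbf_n\Xbf\Hbf_m \in V$. Together with the centering already built into $\widetilde{\Abf}, \widetilde{\Bbf}$, this shows that (i) is equivalent to
\[
\langle \widetilde{\Abf}\Xbf\widetilde{\Bbf},\Xbf\rangle \ge 0 \quad \text{for all } \Xbf \in \R^{n\times m}.
\]
This says exactly that the Kronecker product $\widetilde{\Bbf}\otimes\widetilde{\Abf}$ is positive semi-definite.

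\textbf{Step 3: spectral argument for the Kronecker product.} Diagonalize $\widetilde{\Abf} = \sum_i \lambda_i \ubf_i\ubf_i^\top$ and $\widetilde{\Bbf} = \sum_j \mu_j \vbf_j\vbf_j^\top$. Testing with $\Xbf = \ubf_i\vbf_j^\top$ shows that the eigenvalues of the Kronecker product are exactly the products $\lambda_i\mu_j$. So the product is positive semi-definite iff $\lambda_i\mu_j\ge 0$ for all $i,j$.

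This holds iff either all $\lambda_i,\mu_j \ge 0$ or all $\lambda_i,\mu_j\le 0$. The degenerate case where one matrix is zero needs a remark: a zero matrix is both positive and negative semi-definite, so it fits either alternative and the statement stays consistent.

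\textbf{Conclusion.} By Lemma~\ref{lemma:centering}, "$\widetilde{\Abf}, \widetilde{\Bbf}$ both negative (resp.\ positive) semi-definite" means "$h_1(\Cbf), h_2(\overline{\Cbf})$ both CND (resp.\ CPD)". This gives (i)$\iff$(iii) and completes the proof.
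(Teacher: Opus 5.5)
Steps 1 and 2 are correct, and your ingredients are the paper's: centering via $\Qbf=\Hbf_n\Qbf\Hbf_m$, perturbing the interior coupling $\abf\bbf^\top$, and rank-one test matrices $\ubf_i\vbf_j^\top$ built from eigenvectors. The paper proves (iii)$\Rightarrow$(ii) with a Frobenius-norm identity using square roots of the centered matrices, and (ii)$\Rightarrow$(iii) by a separate contrapositive. You instead reduce (i) to positive semi-definiteness of $\widetilde{\Bbf}\otimes\widetilde{\Abf}$ on all of $\R^{n\times m}$ and read both directions off its spectrum $\{\lambda_i\mu_j\}$. This is cleaner. It also covers the case where neither $h_1(\Cbf)$ nor $h_2(\overline{\Cbf})$ is CND or CPD, which the paper's contrapositive (starting from ``$h_1(\Cbf)$ is CND'') does not reach.

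The one real problem is the final claim of Step 3. The condition ``$\lambda_i\mu_j\ge 0$ for all $i,j$'' is \emph{not} equivalent to ``all $\lambda_i,\mu_j\ge 0$ or all $\lambda_i,\mu_j\le 0$''. If every $\lambda_i=0$, the products vanish whatever the signs of the $\mu_j$. Your remark that the zero matrix ``fits either alternative'' handles $\widetilde{\Abf}$, but not its partner. That partner may be indefinite, and then neither alternative holds.

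This cannot be patched, because it is a counterexample to the proposition as stated. Take the squared loss ($h_1=h_2=\operatorname{id}$), $\Cbf=0$, $m=3$ and $\overline{\Cbf}=\mathrm{diag}(1,-1,0)$. The quadratic part $-\langle h_1(\Cbf)\Pbf h_2(\overline{\Cbf}),\Pbf\rangle$ is identically zero, so (i) and (ii) hold. But $\ubf=(1,0,-1)$ and $\ubf=(0,1,-1)$ give $\ubf^\top\overline{\Cbf}\ubf=1$ and $-1$, so $\overline{\Cbf}$ is neither CND nor CPD and (iii) fails.

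The paper's proof has the same blind spot. It picks an eigenvalue $\mu\le 0$ of $\Hbf_n h_1(\Cbf)\Hbf_n$ and asserts $\mu\lambda>0$, which needs $\mu<0$ and is impossible when that centered matrix is zero. What your computation actually proves is (i)$\iff$(ii)$\iff$``$\lambda_i\mu_j\ge 0$ for all $i,j$''. This agrees with (iii) as soon as $\Hbf_n h_1(\Cbf)\Hbf_n\neq 0$ and $\Hbf_m h_2(\overline{\Cbf})\Hbf_m\neq 0$. State that nondegeneracy hypothesis, or add ``or one of the two centered matrices vanishes'' to (iii), rather than asserting consistency.

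Likewise, zero-mass points cannot simply be ``discarded''. If $a_i=0$, every element of $\overline{\Pi}$ has a zero $i$-th row. So (i) only constrains the principal submatrix of $h_1(\Cbf)$ on the support of $\abf$, while (iii) is about the full matrix. For instance, take $\abf=(\frac12,\frac12,0)$, $h_1=h_2=\operatorname{id}$, $\Cbf=\mathrm{diag}(-1,-1,5)$ and $\overline{\Cbf}=-\Ibf_m$. Then (i) holds, but $\Cbf$ is neither CND nor CPD.

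So $\abf,\bbf>0$ is a genuine extra hypothesis for (i)$\Rightarrow$(iii), not a harmless reduction. The paper uses it silently through $\varepsilon<\min a_ib_j/|Q_{ij}|$. The direction (iii)$\Rightarrow$(i) needs neither this nor the nondegeneracy condition.
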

\begin{proof}
    The equivalence between the two first points was 
    already established in Lemma \ref{lemma:proof_cnd}; here, we only prove the equivalence between the last two points.
    We define $\Mbf \triangleq f_1(\Cbf) \abf \one_m^\top + \one_n \bbf^\top f_{2}(\overline{\Cbf})^\top$. 
    To ease the notation we note $\Cbf_1 \triangleq h_{1}(\Cbf), \Cbf_2 \triangleq -h_2(\overline{\Cbf})$.
    As written in \eqref{eq:simplification} the loss can be written 
    as 
    $f(\Pbf) \triangleq \langle \Lbf \otimes \Pbf, \Pbf\rangle = \langle \Mbf, \Pbf\rangle + \langle \Cbf_1 \Pbf \Cbf_2, \Pbf \rangle 
    =  \langle \Mbf, \Pbf\rangle + \tr(\Pbf^\top \Cbf_1 \Pbf \Cbf_2)$ (the matrices $\Cbf_1, \Cbf_2$ are symmetric we can remove the transpose).
    
    Suppose that $h_1(\Cbf), h_2(\overline{\Cbf})$ are both CND, we show that $f$ is concave on $\Pi(\abf, \bbf)$.
    To do this we first show that, for any $\Pbf_1, \Pbf_2 \in \Pi(\abf, \bbf)$,
    \begin{equation}
        \tr\left(\Qbf^\top \Cbf_1 \Qbf \Cbf_2\right) \leq 0 \text{ where } \Qbf \triangleq \Pbf_1 - \Pbf_2 \in \R^{n \times m}\,.
    \end{equation}
    This matrix satisfies $\Qbf \one_m = 0, \Qbf^\top \one_n = 0$ since the couplings have the same marginals.
    Also, with the centering matrices $\Hbf_n , \Hbf_m$ defined in Lemma \ref{lemma:centering}, we have
    \begin{equation}
        \Hbf_n \Qbf \Hbf_m = (\Ibf_n - \frac{1}{n}\one_n \one_n^\top) \Qbf \Hbf_m = (\Qbf - \frac{1}{n}\one_n (\one_n^\top \Qbf)) \Hbf_m = \Qbf \Hbf_m = \Qbf\,.
    \end{equation}
    Hence, 
    \begin{equation}
        \label{eq:tosquareroot}
        \begin{split}
            \tr\left(\Qbf^\top \Cbf_1 \Qbf \Cbf_2\right) &= \tr\left((\Hbf_n \Qbf \Hbf_m)^\top \Cbf_1 (\Hbf_n \Qbf \Hbf_m) \Cbf_2\right) 
            = \tr\left(\Qbf^\top (\Hbf_n \Cbf_1 \Hbf_n) \Qbf (\Hbf_m \Cbf_2 \Hbf_m)\right) \\
            &= -\tr\left(\Qbf^\top [-(\Hbf_n \Cbf_1 \Hbf_n)] \Qbf (\Hbf_m \Cbf_2 \Hbf_m)\right)\,. \\
        \end{split}
    \end{equation}
    Since $h_1(\Cbf), h_2(\overline{\Cbf})$ are both CND, 
    $\Cbf_1 = h_1(\Cbf)$ is CND and $\Cbf_2 = - h_2(\overline{\Cbf})$ is CPD.
    Thus $\Abf \triangleq -(\Hbf_n \Cbf_1 \Hbf_n), \Bbf \triangleq (\Hbf_m \Cbf_2 \Hbf_m)$ 
    are symmetric positive semi-definite by Lemma \ref{lemma:centering}, 
    thus they admit a square root.
    Consequently, \eqref{eq:tosquareroot} implies 
    \begin{equation}
        \label{eq:topositive}
        \begin{split}
            \tr\left(\Qbf^\top \Cbf_1 \Qbf \Cbf_2\right) &= -\tr\left(\Qbf^\top \Abf^{1/2} \Abf^{1/2} \Qbf \Bbf^{1/2} \Bbf^{1/2}\right) 
            = -\|\Abf^{1/2} \Qbf \Bbf^{1/2}\|_F^2 \leq 0\,,
        \end{split}
    \end{equation}
    where $\|\cdot\|_F$ is the Frobenius norm. 
    The concavity of $f$ on $\Pi(\abf, \bbf)$ is a direct consequence, since it shows the midpoint inequality 
    $f(\frac{\Pbf_1 + \Pbf_2}{2}) \geq \frac{1}{2}(f(\Pbf_1) + f(\Pbf_2))$ for any $\Pbf_1, \Pbf_2 \in \Pi(\abf,\bbf)$.
    Indeed, with the same calculus as in the proof of Lemma \ref{lemma:proof_cnd},
    \begin{equation}
        \label{eq:midpoint_specific}
        \begin{split}
        f(\frac{\Pbf_1 + \Pbf_2}{2}) - \frac{1}{2}(f(\Pbf_1) + f(\Pbf_2)) &\stackrel{*}{=}  -\frac{1}{4} \tr(\Qbf^\top \Cbf_1 \Qbf \Cbf_2) 
        = \frac{1}{4}\|\Abf^{1/2} \Qbf \Bbf^{1/2}\|_F^2 \geq  0\,,
        \end{split}
    \end{equation}
    where in $(*)$ the linear terms get cancelled. 
    
    When $\Cbf_1, \Cbf_2$ are both CPD we make the same reasoning, but instead we consider 
    $\Abf = (\Hbf_n \Cbf_1 \Hbf_n), \Bbf = -(\Hbf_m \Cbf_2 \Hbf_m)$: this does not change the conclusion.
    
    Now suppose that $h_1(\Cbf)$ is CND but not $h_2(\overline{\Cbf})$: in other words, $\Cbf_1 = h_1(\Cbf)$ is CND and $\Hbf_m  h_2(\overline{\Cbf}) \Hbf_m$ 
    has a positive eigenvalue.
    Since $\Cbf_2 = - h_2(\overline{\Cbf})$, there exists a negative eigenvalue $\lambda < 0$ of $\Hbf_m \Cbf_2 \Hbf_m$, 
    with corresponding eigenvector $\vbf$.

    We will construct $\Pbf_1, \Pbf_2 \in \Pi(\abf, \bbf)$ such that the midpoint difference in the LHS of \eqref{eq:midpoint_specific} is negative.
    We first construct $\Qbf \in \R^{m \times n}$ with $\Qbf \one_m = 0, \Qbf^\top \one_n = 0$ and such that $\tr(\Qbf^\top \Cbf_1 \Qbf \Cbf_2) > 0$.
    Consider the rank-one matrix $\Qbf = \Hbf_n \ubf (\Hbf_m \vbf)^\top = \Hbf_n \ubf  \vbf^\top \Hbf_m$ where $\ubf$ 
    is any eigenvector of $\Hbf_n \Cbf_1 \Hbf_n$  associated to an eigenvalue $\mu \leq 0$: it satisfies the mentioned properties.

    Now $\tr((\Hbf_n \ubf  \vbf^\top \Hbf_m)^\top \Cbf_1 \Hbf_n \ubf  \vbf^\top \Hbf_m \Cbf_2) 
    = (\ubf^\top \Hbf_n \Cbf_1 \Hbf_n \ubf)\cdot (\vbf^\top \Hbf_m \Cbf_2 \Hbf_m \vbf) = \mu \cdot \lambda > 0$.
    Consequently, this $\Qbf$ satisfies $\tr(\Qbf^\top \Cbf_1 \Qbf \Cbf_2)  > 0$.
    This is true for any $\Qbf' = \alpha \Qbf$ with $\alpha > 0$. 
    We finally show that we can decompose it as $\Qbf = \Pbf_1 - \Pbf_2$ with $\Pbf_1, \Pbf_2 \in \Pi(\abf, \bbf)$.
    For this we consider $\Pbf_1 = \abf \bbf^\top + \varepsilon \Qbf, \Pbf_2 = \abf \bbf^\top - \varepsilon \Qbf$ for 
    $0 < \varepsilon < \min_{(i,j): Q_{ij} \neq 0}\frac{a_i b_j}{|Q_{ij}|}$ small enough.
    We have $\Pbf_1 - \Pbf_2 =  2 \varepsilon \Qbf$. and $\Pbf_1, \Pbf_2 \in \Pi(\abf, \bbf)$. 
    We consider $\Qbf' = 2 \varepsilon \Qbf$: by the previous reasoning  $\tr(\Qbf'^\top \Cbf_1 \Qbf' \Cbf_2)  > 0$ which concludes.
\end{proof}

Below, we present a few examples that satisfy the conditions of Proposition \ref{prop:separable_loss_cnd}. 
They can all be viewed as corollaries of the Bregman divergence setting 
associated with a convex function $\phi$. 
By the previous proposition, the corresponding GW loss is concave if and only if both $\Cbf$ and $\phi'(\overline{\Cbf})$ are CND or CPD.

\paragraph{Example 1: Squared case $\Lcal = \Lcal_2$.}  
The squared case corresponds simply to $\phi' = \operatorname{id}$, so the problem is concave whenever both $\Cbf$ and $\overline{\Cbf}$ are CND (or CPD). 
Examples of CND and CPD matrices can be found in the comprehensive treatment by \citet{wendland2004scattered} 
or in \citet[Section 2]{maron2018probably}. 
The goal here is not to provide an exhaustive list of examples. 
However, a particularly important and widely used setting in GW is when both $\Cbf$ and $\overline{\Cbf}$ are squared Euclidean distance matrices, i.e.,  
\[
C_{ik} = \|\xbf_i - \xbf_k\|_2^2, \quad \overline{C}_{jl} = \|\ybf_j - \ybf_l\|_2^2 \text{ for some } \xbf_1, \cdots, \xbf_n \text{ and } \ybf_1, \cdots, \ybf_m\,.
\]  
It is easy to see that if $\ubf^\top \one_n = 0$, then
\[
\ubf^\top \Cbf \ubf = \Big\|\sum_i u_i \xbf_i \Big\|_2^2 \geq 0,
\] 
so both matrices are CPD. In this case, the GW problem is concave, admits sparse optimal solutions, and both the coupling and bilinear relaxations are tight.
Remarkably, this example essentially captures the whole picture thanks to the celebrated Schoenberg theorem \citep{schoenberg1938metric}: in short, if $\Cbf$ is a symmetric $n \times n$ matrix with zero diagonal, then $\Cbf$ is CND if and only if it can be written as
$
C_{ik} = \|\xbf_i - \xbf_k\|_\Hcal^2,
$
for some points $\xbf_1, \dots, \xbf_n$ in a Hilbert space $\Hcal$. In other words, in the squared case $\Lcal = \Lcal_2$, most CND situations correspond precisely to squared distance matrices for $\Cbf$ and $\overline{\Cbf}$.

\paragraph{Example 2: Kullback-Leibler case $\Lcal = \Lcal_{\operatorname{KL}}$.}  
The KL case is interesting because it highlights the role of a particular type of matrices. 
It corresponds to a Bregman divergence with $\phi(x) = x \log(x) - x$, i.e., $\phi'(x) = \log(x)$. 
The problem is concave when both $\Cbf$ and $\log(\overline{\Cbf})$ are CND or CPD. 
Matrices whose logarithm is CPD are well-studied in the literature: they are called infinitely divisible matrices. One characterization is that any elementwise 
power of the matrix should be CPD \citep{bhatia2006infinitely}.

\begin{remark}
The previous conclusions remain valid if a linear term is added to the loss, i.e., 
for objectives of the form $\Pbf \mapsto \langle \Abf, \Pbf \rangle + \langle \Lbf \otimes \Pbf, \Pbf \rangle$. 
To study concavity, it suffices to analyze the quadratic part. 
As long as $\Lbf$ is CND with respect to $\overline{\Pi}$, the structural consequences for minimizers—such 
as sparsity and tightness of the coupling relaxation—remain unchanged.
\end{remark}

\subsection{What about cyclical monotonicity ?}

To conclude, I now return to the last property left aside: the cyclical monotonicity of optimal transport plans. 
For GW, deriving monotonicity-type results and extending this notion is considerably more challenging. 
Still, I will present an argument that is sometimes used to study optimal GW plans, and discuss its limitations.

The key idea is that a solution of a quadratic program (QP) is also a solution of a suitably associated linear program (LP). 
By analyzing this LP, we can gain insight into the structure of the QP solutions. 
This perspective was used in \citet{vincent2021online} to differentiate the GW distance with respect to the weights $\abf$ and $\bbf$, 
and more recently in \citet{murray_Pickarski} to analyze optimal transport plans in the \emph{semi-relaxed} GW setting, 
in particular to detect when a Monge map exists (i.e., when the coupling relaxation is tight). We state the result below.

\begin{proposition}
    Consider a symmetric 4D tensor $\Lbf$. If $\Pbf^\star$ is a solution of \eqref{eq:quad_ot} then it is also a solution of the linear problem
    \begin{equation}
        \min_{\Pbf \in \Pi(\abf, \bbf)} \ \langle \Lbf \otimes \Pbf^{\star}, \Pbf\rangle\,.
    \end{equation}
    In other words, $\Pbf^\star$ solves \eqref{eq:linear_ot} with $\Cbf = \Lbf \otimes \Pbf^{\star}$.
\end{proposition}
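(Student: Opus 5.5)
The plan is a first-order optimality argument based on a perturbation along a segment, using only the convexity of $\Pi(\abf,\bbf)$ and the symmetry of $\Lbf$. Write $f(\Pbf)\triangleq\langle \Lbf\otimes\Pbf,\Pbf\rangle$. Fix an arbitrary $\Pbf\in\Pi(\abf,\bbf)$ and set $\Pbf_t \triangleq \Pbf^\star + t(\Pbf-\Pbf^\star)$ for $t\in[0,1]$. By convexity of the coupling polytope, $\Pbf_t\in\Pi(\abf,\bbf)$. Optimality of $\Pbf^\star$ for \eqref{eq:quad_ot} then gives $f(\Pbf_t)\geq f(\Pbf^\star)$ for all $t\in[0,1]$.

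The next step is to expand $f(\Pbf_t)$ exactly, since it is a polynomial of degree two in $t$. With $\Dbf\triangleq \Pbf-\Pbf^\star$, bilinearity gives
\[
f(\Pbf_t) = f(\Pbf^\star) + t\big(\langle \Lbf\otimes\Pbf^\star,\Dbf\rangle + \langle \Lbf\otimes\Dbf,\Pbf^\star\rangle\big) + t^2\langle \Lbf\otimes\Dbf,\Dbf\rangle .
\]
This is the only place where symmetry $L_{ijkl}=L_{klij}$ is needed. It gives $\langle \Lbf\otimes\Dbf,\Pbf^\star\rangle=\sum_{ijkl}L_{ijkl}D_{ij}P^\star_{kl}=\langle \Lbf\otimes\Pbf^\star,\Dbf\rangle$, so the linear coefficient equals $2\langle \Lbf\otimes\Pbf^\star,\Dbf\rangle$. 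It is worth checking the index convention of $\otimes$ carefully here, as it is the one spot where an error could creep in.

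Dividing $f(\Pbf_t)-f(\Pbf^\star)\geq 0$ by $t>0$ and letting $t\to 0^+$ gives $2\langle \Lbf\otimes\Pbf^\star,\Pbf-\Pbf^\star\rangle\geq 0$. Equivalently, $\langle \Lbf\otimes\Pbf^\star,\Pbf\rangle\geq\langle \Lbf\otimes\Pbf^\star,\Pbf^\star\rangle$ for every $\Pbf\in\Pi(\abf,\bbf)$. Since $\Pbf^\star$ is itself feasible, it minimizes the linear problem with cost $\Cbf=\Lbf\otimes\Pbf^\star$.

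I do not expect a real obstacle; the argument is the standard necessary condition for minimizing a differentiable function over a convex set. The only subtlety is that this is a necessary condition, not a sufficient one. The converse fails in general, and that failure is presumably the limitation the paper discusses afterwards.
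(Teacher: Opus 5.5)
Your proposal is correct and follows essentially the same argument as the paper: the same segment perturbation from $\Pbf^\star$, the quadratic expansion with symmetry of $\Lbf$ merging the cross terms, and the limit $t\to 0^+$. The only cosmetic difference is that you compare against an arbitrary feasible $\Pbf$, while the paper compares against a minimizer $\Pbf^0$ of the linear problem; your version gives the inequality $\langle \Lbf\otimes\Pbf^\star,\Pbf\rangle\geq\langle \Lbf\otimes\Pbf^\star,\Pbf^\star\rangle$ for all couplings directly.
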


\begin{proof}
    A proof can be found in \citet[Theorem 1.12]{murty1988linear} but we write it for completeness. 

We note $f(\Pbf) = \langle \Lbf \otimes \Pbf, \Pbf\rangle$ the GW loss.
Let $\Pbf^{0}$ be a solution of the linear problem \eqref{eq:linear_ot} with $\Cbf = \Lbf \otimes \Pbf^{\star}$. 
We consider for $\lambda \in (0,1)$ the matrix
\begin{equation}
\Pbf^{\lambda} =  \lambda \Pbf^{0} + (1-\lambda) \Pbf^{\star} = \Pbf^{\star}+\lambda(\Pbf^{0}-\Pbf^{\star})\,.
\end{equation}
Then, by convexity of $\Pi(\abf, \bbf)$, we have $\Pbf^{\lambda} \in \Pi(\abf, \bbf)$. Also, since $\Pbf^{\star}$ is optimal, 
\begin{equation}
f(\Pbf^{\star})-f(\Pbf^{\lambda}) \leq 0\,.
\end{equation}
But
\begin{equation}
\begin{split}
f(\Pbf^{\lambda}) &= f(\Pbf^{\star}+\lambda(\Pbf^{0}-\Pbf^{\star})) = \langle \Lbf\otimes \Pbf^{\star}, \Pbf^{\star}\rangle 
+ \langle \Lbf\otimes \Pbf^{\star}, \lambda(\Pbf^{0}-\Pbf^{\star})\rangle\\
&+\langle \Lbf\otimes \lambda(\Pbf^{0}-\Pbf^{\star}), \Pbf^{\star}\rangle 
+\lambda^{2} \langle \Lbf\otimes (\Pbf^{0}-\Pbf^{\star}), (\Pbf^{0}-\Pbf^{\star})\rangle \\
&=f(\Pbf^{\star})+2\lambda \langle \Lbf\otimes \Pbf^{\star}, (\Pbf^{0}-\Pbf^{\star})\rangle +\lambda^{2} \langle \Lbf\otimes (\Pbf^{0}-\Pbf^{\star}), (\Pbf^{0}-\Pbf^{\star})\rangle
\end{split}
\end{equation}
Using $f(\Pbf^{\star})-f(\Pbf^{\lambda}) \leq 0 $ and dividing by $\lambda>0$ implies
\begin{equation}
2\langle \Lbf\otimes \Pbf^{\star}, (\Pbf^{0}-\Pbf^{\star})\rangle 
+\lambda \langle \Lbf\otimes (\Pbf^{0}-\Pbf^{\star}), (\Pbf^{0}-\Pbf^{\star})\rangle \geq 0\,.
\end{equation}
Since this is true for any $\lambda \in (0,1)$, by letting $\lambda \rightarrow 0^{+}$ we obtain
\begin{equation}
2\langle \Lbf\otimes \Pbf^{\star}, (\Pbf^{0}-\Pbf^{\star})\rangle \geq 0 \implies \langle \Lbf\otimes \Pbf^{\star}, \Pbf^{0}\rangle \geq \langle \Lbf\otimes \Pbf^{\star}, \Pbf^{\star}\rangle\,.
\end{equation}
Since $\Pbf^{0}$ is any optimal solution for the linear problem this implies that $\Pbf^{\star}$ is an optimal solution, which concludes the proof.
\end{proof}

The previous result shows that an optimal GW plan is also optimal for a linear OT problem, with the important twist that the cost itself depends on the solution. 
As a consequence, we obtain the following small monotonicity-type result for GW optimal plans.
\begin{corollary}
    Let $\Pbf^\star$ be optimal for GW and $\Cbf = \Cbf(\Pbf^\star) \triangleq \Lbf \otimes \Pbf^\star$. Then for any  
    for any $N \in \mathbb{N}^{*}, (i_1, j_1),\cdots, (i_N, j_N) \in \supp(\Pbf^\star)^N$ and permutation $\sigma \in \mathfrak{S}_N$,
\begin{equation}
\sum_{k=1}^{N} C_{i_k j_k} \leq \sum_{k=1}^{N} C_{i_k j_{\sigma(k)}}\,. 
\end{equation}
\end{corollary}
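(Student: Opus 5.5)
The corollary follows by chaining the preceding proposition with the ``easy'' direction of the cyclical monotonicity theorem for linear OT. First, I apply the previous proposition to $\Pbf^\star$. Since $\Pbf^\star$ solves \eqref{eq:quad_ot}, it also solves \eqref{eq:linear_ot} with the fixed cost matrix $\Cbf = \Lbf \otimes \Pbf^\star \in \R^{n\times m}$. From that point on, $\Cbf$ is an ordinary cost matrix that no longer varies with $\Pbf$, and $\Pbf^\star$ is a linear OT optimal plan for it.

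Second, I invoke the direction ``$\Pbf$ optimal $\implies$ monotonicity'' of the cyclical monotonicity theorem. The paper stated that this direction follows the $N=2$ argument, so for completeness I would sketch it directly.

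Fix $(i_1,j_1),\dots,(i_N,j_N)\in\supp(\Pbf^\star)$ and $\sigma\in\mathfrak{S}_N$. Set $\varepsilon = \min_k P^\star_{i_k j_k}>0$, and define
\[
\Qbf = \Pbf^\star + \frac{\varepsilon}{N}\sum_{k=1}^N \big(\mathbf{e}_{i_k}\mathbf{e}_{j_{\sigma(k)}}^\top - \mathbf{e}_{i_k}\mathbf{e}_{j_k}^\top\big).
\]
Here $\mathbf{e}$ denotes the canonical basis vectors. The factor $1/N$ covers the case where some pairs repeat, since a given entry can then be decreased up to $N$ times.

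Each row $i_k$ gains one unit and loses one unit, and likewise each column $j_{\sigma(k)}$ and $j_k$. Because $\sigma$ is a bijection, the multiset of columns gaining equals the multiset of columns losing, so the marginals are unchanged. Each decreased entry $(i_k,j_k)$ is lowered by at most $\varepsilon$ in total, so nonnegativity is preserved and $\Qbf\in\Pi(\abf,\bbf)$.

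Linear optimality of $\Pbf^\star$ then gives
\[
0\le \langle \Cbf,\Qbf-\Pbf^\star\rangle = \frac{\varepsilon}{N}\Big(\sum_k C_{i_k j_{\sigma(k)}}-\sum_k C_{i_k j_k}\Big),
\]
which is exactly the claimed inequality.

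The only step needing care is this feasibility bookkeeping when the pairs $(i_k,j_k)$ are not distinct. Choosing the step size $\varepsilon/N$ handles it. Everything else is immediate from the earlier proposition, whose key point is that the cost depends on $\Pbf^\star$ yet is held fixed once $\Pbf^\star$ is given.
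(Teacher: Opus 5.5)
Your proof is correct and follows the paper's route: the preceding proposition makes $\Pbf^\star$ an optimal plan for \eqref{eq:linear_ot} with the frozen cost $\Cbf=\Lbf\otimes\Pbf^\star$ (this uses the symmetry of $\Lbf$ assumed there), and the corollary is then the ``optimal $\implies$ monotonicity'' direction of cyclical monotonicity. Your explicit perturbation with step $\varepsilon/N$, which handles repeated pairs and correctly uses $\langle \Cbf,\Qbf-\Pbf^\star\rangle\geq 0$, spells out the general-$N$ case that the paper only says follows as for $N=2$.
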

This result is mostly a curiosity: in a sense, GW plans exhibit a form of monotonicity, 
but it is not something we can readily exploit. 
The converse is, to the best of my knowledge, false, and since the cost itself depends on the optimal plan, 
it is difficult to derive broad general statements. 
Still, when $\Lbf$ has additional structure, this perspective can be pushed further to obtain meaningful 
information about optimal GW plans \citep{murray_Pickarski}.

\section{Conclusion}

I have shown in this note that the CND property allows one to recover GW counterparts 
of several classical linear OT results: in particular, 
the existence of sparse optimal transport plans and a ``Monge $=$ Kantorovich’’ situation. 
A natural question is how far one can go beyond the CND setting. 
My view is that many of these properties no longer hold in general: I believe that 
there are GW instances where every optimal plan has ``dense'' support, 
and there are choices of $\Lbf$ for which no permutation solution is optimal.
However, as noted by \citet[Section 3]{maron2018probably}, such situations appear to be uncommon in practice: CND-type energies arise quite frequently.

\bibliographystyle{unsrtnat}
\bibliography{references}

\end{document}